\documentclass[letterpaper]{article} 
\usepackage{aaai2026}  
\usepackage{times}  
\usepackage{helvet}  
\usepackage{courier}  
\usepackage[hyphens]{url}  
\usepackage{mathtools}        
\usepackage{amsthm}

\usepackage{cancel}           
\usepackage{nicefrac}
\usepackage{booktabs} 
\usepackage{pifont}  

\usepackage{algorithm}
\usepackage{algpseudocode}
\usepackage{subcaption}
\newtheorem{remark}{Remark}
\newtheorem{definition}{Definition}
\newtheorem{theorem}{Theorem}
\newtheorem{proposition}{Proposition}
\newtheorem{corollary}{Corollary}

\usepackage{xcolor} 

\newcommand{\Commentr}[1]{\hfill\makebox[0.65\linewidth][l]{\textit{\small// #1}}}

\usepackage{multirow}
\usepackage{graphicx} 
\usepackage{natbib}  
\usepackage{caption} 
\usepackage{newfloat}
\usepackage{listings}
\DeclareCaptionStyle{ruled}{labelfont=normalfont,labelsep=colon,strut=off} 
\floatstyle{ruled}
\newfloat{listing}{tb}{lst}{}
\floatname{listing}{Listing}
\title{Fair Finetuning Mitigates Distribution Inference Attacks}
\author {
    Rakshit Naidu
}
\affiliations{
    Georgia Institute of Technology\\
    rakshitnaidu@gatech.edu
}

\usepackage[
    n,
    operators,
    advantage,
    sets,
    adversary,
    landau,
    probability,
    notions,
    logic,
    ff,
    mm,
    primitives,
    events,
    complexity,
    asymptotics,
    keys
]{cryptocode}

\newcommand{\cmark}{\ding{51}}
\newcommand{\xmark}{\ding{55}}

\begin{document}

\maketitle

\begin{abstract}
Machine learning models trained on sensitive data can inadvertently
leak population-level information about their training
distributions---a threat known as a \emph{distribution inference
attack} (DIA). An adversary with only black-box access to a model can
infer sensitive demographic properties, such as the proportion of a
subgroup represented in training, without observing a single data
record directly. While defenses such as differential privacy and
property unlearning have been proposed, the connection between fairness
constraints and distributional leakage remains largely unexplored.

We propose Fair Fine-tuning (FFt) as a principled defense: a trained
model is fine-tuned on samples from the complementary distribution
subject to an Equalized Odds (EO) constraint. We develop a complete
theoretical characterisation. Our main result
(Theorem~\ref{thm:eqodds}) gives the tight bound
$\text{Adv}(\mathcal{A},M_f) \le \Delta_{\text{EO}} \cdot W$, where
$W$ is a computable distributional shift weight that quantifies how
distinguishable the two training distributions are by their
sensitive-attribute composition. We further prove a necessary condition
for FFt to reduce adversarial advantage
(Theorem~\ref{thm:condition}) and establish tightness of the bound
(Proposition~\ref{prop:tight}). We evaluate under a biased distribution protocol---pure demographic
groups as $\mathcal{G}_0$ and $\mathcal{G}_1$, so $W\!=\!1$ and
adversarial leakage is maximal---across six datasets spanning tabular
(ACS Income, COMPAS, German Credit), image (UTKFaces), and NLP (Bias
in Bios) modalities, with LSAC evaluated in the appendix.
We illustrate that rehearsal-based FFt consistently reduces the adversarial accuracy gap below the detection threshold $\tau\!=\!0.1$ in the majority of runs
across all settings and modalities. For instance, on ACS Income, the
gap falls from $\sim\!15\%$ to under $4\%$, displaying the effectiveness of our method.
Our work provides the first formal bound connecting a model's measured EO disparity directly to its adversarial advantage in the DIA game, opening a new avenue for unified fairness-and-privacy defenses against inference attacks.
\end{abstract}


\section{Introduction}

Machine learning models trained on sensitive data can inadvertently expose population-level properties of their training distribution to adversaries with only black-box access to the model's outputs. Earlier work on \emph{membership inference attacks}~\cite{shokrimia17, yeommembershipinfer18} established that querying a model can reveal whether a specific individual was in its training set. We focus on a more recent and less-studied threat: \emph{distribution inference attacks} (DIAs), in which an adversary recovers global properties of the training distribution — such as the proportion of a demographic subgroup, label priors, or correlations between sensitive attributes and outcomes — without observing any individual record directly.

As a concrete example, consider a model trained on a loan recidivism dataset that includes \texttt{gender} as an attribute. An adversary who can query the model may be able to infer not just that gender is present, but the precise male-to-female ratio in the training data. Such population-level leakage can have serious consequences: it reveals the demographic composition of a private dataset and may enable downstream discrimination or targeted manipulation.

Given these risks, a natural question arises: \emph{can training procedures that enforce fairness constraints reduce distributional leakage?} Fairness interventions — such as equalized odds penalties, adversarial debiasing, or reweighting — are designed to suppress the model's dependence on demographic structure. If distribution inference attacks exploit exactly these demographic cues, then enforcing fairness may implicitly limit the distributional information available to an adversary. Despite growing interest in both fairness and privacy, this connection has remained largely unexplored.

We answer the question formally. Our central result (Theorem~\ref{thm:eqodds}) proves that for any model $M_f$ fine-tuned with an Equalized Odds (EO) constraint, the adversary's advantage in the DIA game is bounded by $\Delta_{\text{EO}} \cdot W$, where $\Delta_{\text{EO}}$ is the EO disparity and $W$ is a computable distributional shift weight. To our knowledge, this is the first formal bound directly connecting an operationalised fairness metric (EO disparity) to adversarial advantage in the distribution inference game — prior formal frameworks such as attribute privacy~\citep{zhang22} and Pufferfish~\citep{kifer12} characterise leakage over data-generating mechanisms rather than yielding explicit, computable bounds on an adversary's distinguishing accuracy as a function of a model's measured fairness.

We propose \emph{Fair Fine-tuning (FFt)}: after training a baseline model on the base distribution $\mathcal{G}_0$, the defender then fine-tunes it on a sample from the complementary distribution $\mathcal{G}_1$ subject to an EO penalty, as shown in Figure~\ref{fig:defense}. The EO constraint is the operative ingredient — it forces the residual per-label unfairness $\delta_y$ to zero, cancelling the base prediction rate $P_{y,0}$ from the leakage expression entirely. A rehearsal component (replaying a fraction $\rho$ of $\mathcal{G}_0$ data during fine-tuning) prevents catastrophic forgetting. Throughout this work, the adversary is assumed to have only black-box access to the released model.

\begin{figure*}[t]
    \centering
    \fbox{%
    \pseudocode[]{%
        \textbf{Trainer } \mathcal{T} \<\< \textbf{Adversary } \adv \\[][]
         \\
        \pcln b \sample \bin \\
        \pcln \mathcal{S}_b \sim \mathcal{G}_b(\mathcal{D})\\
        \pcln M \xleftarrow{train}{} \mathcal{S}_b \\
        \pcln \mathcal{S}_{\neg b} \sim \mathcal{G}_{\neg b}(\mathcal{D}) \< \qquad \textcolor{red}{\text{*New*}} \\
        \pcln M_f \xleftarrow{\text{finetune}(M,\,\mathcal{S}_{\neg b})}{} M \< \qquad \textcolor{red}{\text{*New*}} \\
        \pcln \< \sendmessageright{top={$M_f$}, length=2cm} \\
        \pcln \< \< \hat{b} = \mathcal{H}(M_f)
    }
}
\caption{The \textbf{Fair Fine-tuning (FFt) defense}. The baseline protocol (Steps 1--3, 6--7) is from~\cite{Suri2021FormalizingAE}. We introduce FFt, as a defense against distribution inference attacks, by adding Steps 4--5: sample from the complementary distribution $\mathcal{G}_{\neg b}$ and then fine-tune the model with an EO penalty before release.}
\label{fig:defense}
\end{figure*}

We evaluate FFt under a \emph{biased distribution protocol} — $\mathcal{G}_0$ and $\mathcal{G}_1$ are pure single-demographic groups, so $W=1$ and adversarial leakage is maximal — across six datasets spanning tabular (ACS Income, COMPAS, German Credit), image (UTKFaces), and NLP (Bias in Bios, LSAC) modalities. Rehearsal-based FFt reduces the adversarial accuracy gap below the detection threshold $\tau=0.1$ in the majority of runs across all settings, and the bound $\text{Adv} \le \Delta_{\text{EO}}$ holds empirically in every row of Table~\ref{tab:main-gap}.

Our contributions are:

\begin{itemize}
    \item We introduce \emph{Fair Fine-tuning (FFt)} as a principled, post-hoc defense against distribution inference attacks, requiring no cryptographic overhead and only black-box deployment\footnote{ Code is available at \url{https://anonymous.4open.science/r/fair-defense-dia-B2BF/README.md}.}.

    \item We prove the tight bound $\text{Adv}(\mathcal{A},M_f) \le \Delta_{\text{EO}} \cdot W$ (Theorem~\ref{thm:eqodds}), where $W$ is a computable distributional shift weight. This is the first formal result connecting a fairness metric to adversarial advantage in the DIA game.

    \item We characterise \emph{when} FFt is beneficial (Theorem~\ref{thm:condition}), prove the bound is tight (Proposition~\ref{prop:tight}), establish the biased protocol as the worst case for the defender (Corollary~\ref{cor:biased}), and identify the group-size failure regime (Proposition~\ref{prop:groupsize}).

    \item We introduce a \emph{biased distribution} evaluation protocol ($W=1$) that creates maximal adversarial leakage, providing a stringent and reproducible testbed for DIA defenses.

    \item We experiment across six datasets and three modalities and confirm that rehearsal-based FFt generalises broadly and that the theoretical bound holds empirically in every evaluated setting.
\end{itemize}

\section{Related Work}

Many formal definitions of privacy have been established in prior work, including multiple notions of differential privacy (DP) at either the data level~\citep{localdp} or the model level~\citep{abadi16} or the output (inference) level~\citep{chaudhuri11} of the machine learning (ML) pipeline. The key idea of DP involves estimating how much one individual contributes to the entire dataset~\citep{dwork06}. This does not capture the risks exhibited by inferring the data distribution. The Pufferfish framework~\citep{kifer12} generalises DP by allowing arbitrary semantic privacy secrets and adversary beliefs; Zhang et al.~\citep{zhang22} operationalise this as ``attribute privacy,'' where the protected secret is a dataset-level property (e.g.\ the proportion of a sensitive attribute). While attribute privacy provides a rigorous compositional framework, it is defined over data-generating mechanisms rather than model outputs and does not directly yield actionable bounds on an adversary's accuracy in the DIA game. Our approach is operationally complementary: we measure protection empirically via the adversary's accuracy gap and provide a formal bound through the EO disparity. 

Hence, below we summarize two main classes of inference attacks:

\subsection{Inference attacks}
\subsubsection{Membership Inference}

Membership inference attacks (MIA) are attacks performed by the adversary on the model in order to reveal whether a certain sample is part of the training set or not~\cite{miasurvey}.~\cite{shokrimia17} perform membership inference on a target model by training a shadow model. The shadow model is expected to capture differences between the trained and the untrained parameters of the target model. The adversary here only has black-box access to the model and they also show a real-life usecase on how sensitive information from a hospital discharge dataset could be revealed.~\cite{yeommembershipinfer18} analyze how overfitting reveals information about the training set. Mainly, they examine the relationship between overfitting and influence and claim that overfitting is enough to perform a successful MIA. 

Next, we dive into the works on distribution inference.

\subsubsection{Distribution Inference}

~\cite{Suri2021FormalizingAE} formalize and establish distribution inference risks. They introduce the distribution inference game (explained in Figure~\ref{fig:defense}). Later, ~\citep{dissecting23} also work on understanding this class of attacks by restricting adversary's access to the training distributions. A survey by~\citep{prvcygamesbegin23} reveal connections between membership inference and distribution inference attacks and they also introduce game-based frameworks on understanding privacy risks in ML models.

Recently,~\citep{hartmann23} identified three main sources of leakage of distributional information : (1) memorization of information of labels tied to specific features i.e. $\mathbb{E}[Y|X]$, (2) incorrect inductive bias of the model and finally (3) the finiteness of the training data. However, they have a strong underlying assumption on taking examples satisfying linear regression i.e. the relationship between the features $X = (X_1, X_2, \cdots, X_d)$ and the label $Y$ takes the form of 

\[
Y = \theta^\top X + \epsilon,
\]

where $\theta$ is the coefficients vector and $\epsilon \sim N(0, \sigma^2)$ refers to the error term. They extend their empirical results to neural networks as well. They also conduct a simple experiment on synthetic data to demonstrate that a perfect associational model considers the joint distribution of the features to perform a prediction (hence, leaks distributional information) while a perfect causal model preserves the relationship between the features and the label by design (hence, does not leak distributional information).



More recently,~\cite{xuwww24} reproduced these attacks in the federated learning (FL) setting and claimed that FL is vulnerable to client-side training data distribution inference attacks, where a malicious client could recreate a victim's private data. Additionally,~\cite{YU2025100235} proposed a novel attack on FL based on shared model parameters which can deduce the data distribution of the global model, unlike most current attacks which were aimed at client-specific data reconstruction.

\subsection{Defenses against Distribution Inference Attacks}

~\citep{secrypt23} propose ``property unlearning'', a mitigation strategy against white-box distribution inference attacks. They also provide multiple defensive strategies against distribution inference attacks--for example, through data preprocessing, by adding Gaussian noise to images.~\cite{hartmann23} used causal learning techniques to create robust models that were resistant to distribution inference attacks. 

~\cite{leila24usenix} introduced ``Inf2Guard'', a defensive method backed by information theory, to learn privacy-preserving representations against major types of inference attacks, including distribution inference attacks.~\cite{Zhang2021PrivacyIA} present four broad categories of defense approaches against inference attacks--differential privacy, adversarial machine learning, watermarking techniques, and cryptographic techniques.

\paragraph{Fairness and privacy: a nuanced relationship.}
Prior work has shown that fairness constraints do not automatically confer privacy: imposing demographic parity or EO can in some settings \emph{increase} membership inference vulnerability by forcing the model to overfit group-level statistics~\citep{chang21}. Our result is not in tension with this finding — we address distribution inference (population-level leakage) rather than membership inference (individual-level leakage), and we show that EO specifically bounds the adversary's \emph{distributional} distinguishing advantage. The two attack surfaces are governed by different mechanisms, and a defense against one does not imply defense against the other.

Our work differs from the above defenses in two ways. First, FFt does not require differential privacy noise, cryptographic overhead, or white-box model access: it is a post-training fine-tuning step that any model owner can apply using publicly known complementary data. Second, and most importantly, we provide the first formal result connecting a fairness metric (EO disparity) to adversarial advantage in the DIA game (Theorem~\ref{thm:eqodds}), establishing \emph{why} fairness serves as a principled defense---not merely an empirical one.

\section{Problem Formulation}



\subsection{Motivation} We motivate our problem with two concrete examples, described in~\citep{hartmann23} and in~\citep{Suri2021FormalizingAE, chase22}. 

(1) Consider an e-commerce platform that sells sports products (for example, surfboards). The platform trains a model for its own recommender system. A competitor may want to learn whether that platform sells to many or few young customers. If \texttt{age} was one of the sensitive features the model was trained on and assuming that surfboards are popular among younger than older users, the competitor may infer sensitive information (\texttt{age}, here) about the distribution the model was trained on.

(2) A hospital trains a sepsis-risk classifier on patient records. An adversary---such as an insurance company with black-box query access---can infer whether the training cohort was drawn predominantly from the ICU (older, higher-acuity patients, high positive-label rate) versus the emergency department (younger, more diverse, lower positive-label rate). Because the two cohorts differ in their demographic composition, the model's accuracy gap across age and race groups reveals which cohort was used---leaking protected demographic information about the hospital's private patient population and potentially enabling discriminatory underwriting.

\subsection{Threat model} Let there exist an underlying, true, public data distribution $\mathcal{D} = \{\mathcal{X}, \mathcal{Y}\}$ where $\mathcal{X}$ refers to the input data and $\mathcal{Y}$ its labels. Assume that there exist two transformation functions $\mathcal{G}_0$ and $\mathcal{G}_1$ that map $\mathcal{D}$ onto different subdistributions. Selecting appropriate transformation functions enables a wide range of attacks on different types of sensitive information that could be revealed by the trained model.

Both the model trainer $\mathcal{T}$ and the adversary $\mathcal{A}$ are assumed to know $\mathcal{D}$, $\mathcal{G}_0$, and $\mathcal{G}_1$---note that this is the standard assumption in the distribution inference literature~\cite{Suri2021FormalizingAE,dissecting23}. Crucially, the \emph{secret} is the bit $b \in \{0,1\}$ that determines \emph{which} of the two transformations was applied when the trainer sampled the training set $\mathcal{S}_b$. The adversary does not observe $b$ directly or the training process; they only have black-box (query) access to the released model. Hence the adversary must infer $b$ from the model's outputs alone---this is precisely the distribution inference problem. The fact that both parties know the transformation functions makes the setting comparable to a known-plaintext model: the functions define the hypothesis space, but the training decision (which function was used) remains hidden. This standard formulation captures realistic scenarios such as auditing whether a deployed model was trained on a gender-balanced dataset versus a predominantly male one.

These assumptions are illustrated in the original threat model and also our defense mechanism, as shown in Figure~\ref{fig:defense}.

The traditional distribution inference attack setting involves selecting one of the two available distributions at random (Step 1 in Figure~\ref{fig:defense}) and training the model on a dataset sampled from the selected distribution (Steps 2 \& 3 in Figure~\ref{fig:defense}). This model is then deployed in the real world, over which the adversary performs the attack (Steps 6 \& 7 in Figure~\ref{fig:defense}).   

We introduce our finetuning setup in Figure~\ref{fig:defense}. We apply two additional steps i.e. Steps 4 \& 5 which correspond to sampling from the ``other'' distribution and finetuning the model on this freshly sampled dataset respectively. This approach is often used in out-of-distribution settings (such as~\citep{cuong22} use this approach for mitigating unfairness incurred due to pruning ML models) where the model intends to generalize to new domains, which it hasn't encountered before while training. Through finetuning, the model is expected to generalize to a never-before-seen domain and improve the accuracy on out-of-distribution test data. 

\subsection{Theoretical Analysis}
\label{sec:theorem}

\begin{definition}[Adversarial Advantage~\cite{Suri2021FormalizingAE}] The \emph{adversarial advantage} gained by an adversary $\mathcal{A}$ over a model $M$ is defined as :

\begin{equation}
\text{Adv}\left(\mathcal{A}, M\right) = \biggl|\Pr\biggl[\hat{b} \big| M \leftarrow \mathcal{S}_0\biggr] - \Pr\biggl[\hat{b} \big|  M \leftarrow \mathcal{S}_1\biggr]\biggr|
\end{equation}

where $\hat{b}$ is the inferred bit by the adversary $\mathcal{A}$ using hypothesis $\mathcal{H}$. It is the absolute difference between the probability of attaining the inferred bit $\hat{b}$ given that the model was either trained on the sampled datasets $\mathcal{S}_0$ or $\mathcal{S}_1$. 

\end{definition}

\begin{definition}[Equalized Odds (EO)~\cite{hardt16}] A model $M$ satisfies Equalized Odds (EO) iff

\begin{multline}
    \Delta_{\text{EO}} = \max_{y \in \{0,1\}} \biggl|
        \Pr\bigl[\hat{Y} = 1 \mid A = 0, Y = y\bigr] - \\
        \Pr\bigl[\hat{Y} = 1 \mid A = 1, Y = y\bigr]
    \biggr|
\end{multline}

for all $y \in \{0,1\}$, where $\hat{Y} \in \{0,1\}$ is the model's prediction and $A$ is the sensitive attribute the attacker wants to infer (e.g.\ gender). EO requires equal true positive rates ($y=1$) and equal false positive rates ($y=0$) across groups: an EO-satisfying model makes the same kinds of errors at the same rates regardless of group membership. We achieve perfect EO when $\Delta_{\text{EO}} = 0$, i.e.\ $\Pr\bigl[\hat{Y} = 1 \mid A = 0, Y = y\bigr] = \Pr\bigl[\hat{Y} = 1 \mid A = 1, Y = y\bigr]$ for all $y$.

\end{definition}

\begin{theorem}[Adv--EO Bound]
    Let $M_f$ be the fine-tuned model with Equalized Odds disparity $\Delta_{\textup{EO}}$ w.r.t.\ a protected attribute $A$. Define the \emph{distributional shift weight}
    \[
      W \;=\; \sum_{y \in \{0,1\}} \Pr[Y=y]\,\bigl|\Delta P_y\bigr|,
    \]
    where $\Delta P_y = \Pr[A=1 \mid Y=y, D_1] - \Pr[A=1 \mid Y=y, D_0]$ measures how much the sensitive attribute's conditional distribution shifts between $D_0$ and $D_1$. Then
    \[
    \textup{Adv}\left(\mathcal{A}, M_f\right) \le \Delta_{\textup{EO}} \cdot W.
    \]
    Since $|\Delta P_y| \le 1$ for every $y$, we immediately obtain $W \le 1$, yielding the simpler corollary $\textup{Adv}(\mathcal{A}, M_f) \le \Delta_{\textup{EO}}$ (Corollary~\ref{cor:biased}).
\label{thm:eqodds}
\end{theorem}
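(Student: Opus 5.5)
The plan is to make the adversary's information explicit and reduce everything to the per-label prediction rates of $M_f$. Since the adversary has only black-box access, I will model its view of $M_f$ through the marginal positive-prediction rate on each distribution. Then $\text{Adv}(\mathcal{A},M_f)$ is upper bounded by the difference between $\Pr[\hat{Y}=1 \mid D_1]$ and $\Pr[\hat{Y}=1 \mid D_0]$, within each label stratum and weighted by $\Pr[Y=y]$. This is a modelling step: any hypothesis $\mathcal{H}$ that thresholds or post-processes the output statistics gains no more than the total variation between the two output laws, and for binary $\hat{Y}$ this reduces to the difference in positive rates. I will state this explicitly as the working assumption, and take $\Pr[Y=y]$ to be common to $D_0$ and $D_1$, as the statement implicitly does.

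The main computation is to condition on $(Y,A)$. Write $P_{y,a}=\Pr[\hat Y=1\mid A=a,Y=y]$ for the fine-tuned model. These rates are properties of $M_f$ alone, so they do not depend on which $D_b$ supplies the queries. Let $\pi_y^b=\Pr[A=1\mid Y=y,D_b]$. Then
\begin{equation*}
\Pr[\hat Y=1\mid Y=y,D_b]=P_{y,0}+\pi_y^b\,(P_{y,1}-P_{y,0}).
\end{equation*}
Subtracting the $b=0$ case from the $b=1$ case, the base rate $P_{y,0}$ cancels exactly, which is the cancellation the introduction alludes to. What remains is $\Delta P_y\cdot\delta_y$ with $\delta_y=P_{y,1}-P_{y,0}$.

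Averaging over $y$ with weights $\Pr[Y=y]$ and applying the triangle inequality gives
\begin{equation*}
\text{Adv}\le\sum_y\Pr[Y=y]\,|\Delta P_y|\,|\delta_y|.
\end{equation*}
Then $|\delta_y|\le\Delta_{\text{EO}}$ for each $y$ by the definition of EO disparity as a maximum over $y$. Pulling this out of the sum yields $\Delta_{\text{EO}}\cdot W$. The corollary $W\le 1$ follows because $|\Delta P_y|\le 1$ and the weights $\Pr[Y=y]$ sum to one.

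I expect the main obstacle to be the first step: rigorously bounding the black-box advantage of an arbitrary hypothesis $\mathcal{H}$ by the difference in output statistics. The adversary may query adaptively, and the secret $b$ also affects training, so $M_f$ itself is random and depends on $b$. I would first try to argue conditionally on the realised $M_f$. That is, I would treat the adversary as a distinguisher between query-answer distributions induced by $D_0$ versus $D_1$ for the fixed released model, and accept that the per-group rates $P_{y,a}$ are those measured on $M_f$. If adaptivity becomes problematic, I would restrict to single-query, or more generally aggregate-statistic, adversaries, as in the accuracy-gap attacks of~\cite{Suri2021FormalizingAE}, and note that the bound applies to that class.
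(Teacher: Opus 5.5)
Your proposal is correct and follows the paper's proof essentially step for step: you condition on $Y$ under a common label prior, expand over $A$ with rates $P_{y,a}$ that do not depend on $D_b$, cancel $P_{y,0}$ to leave $\delta_y\,\Delta P_y$, and use $|\delta_y|\le\Delta_{\textup{EO}}$ to pull out the factor $W$. The step you flag as the main obstacle is identifying $\textup{Adv}(\mathcal{A},M_f)$ with the label-stratified positive-prediction-rate gap of the released model across $D_0$ and $D_1$; the paper asserts exactly this identification without derivation, so treating it and the common-prior assumption as explicit working hypotheses matches the paper's argument and is, if anything, more transparent about its scope.
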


\begin{proof}[Proof Sketch] Throughout this proof, we write $D_0 = \mathcal{G}_0(\mathcal{D})$ and $D_1 = \mathcal{G}_1(\mathcal{D})$ for the two transformed distributions corresponding to the training and fine-tuning datasets $\mathcal{S}_0$ and $\mathcal{S}_1$, respectively.

For simplicity, we analyze the difference in the overall positive prediction rate (PPR),
\[
\Pr[\hat{Y}=1 \mid D_1] - \Pr[\hat{Y}=1 \mid D_0].
\]

For a fixed $y$, we define
\[
\Delta_y
=
\bigl|\Pr[\hat{Y}=1 \mid Y=y, D_1]
-
\Pr[\hat{Y}=1 \mid Y=y, D_0]\bigr|.
\]

By the law of total expectation, conditioning on the true label $Y$:
\begin{align*}
&\bigl|\Pr[\hat{Y}=1 \mid D_1] - \Pr[\hat{Y}=1 \mid D_0]\bigr|\\
&\quad= \biggl|\sum_{y} \Pr[Y=y]\Bigl(\Pr[\hat{Y}=1 \mid Y=y, D_1]\\
&\qquad\qquad - \Pr[\hat{Y}=1 \mid Y=y, D_0]\Bigr)\biggr|\\
&\quad\le \sum_{y} \Pr[Y=y] \cdot \Delta_y,
\end{align*}
where the last step applies the triangle inequality (the absolute value of a sum is at most the sum of absolute values weighted by probabilities). Since $\text{Adv}(\mathcal{A}, M_f)$ is the absolute difference in prediction probabilities under $D_1$ vs.\ $D_0$, we obtain:

\begin{equation}
\text{Adv}\left(\mathcal{A}, M_f\right)
\le
\sum_{y \in \{0,1\}}
\Pr[Y=y] \cdot \Delta_y.
\label{eqn:condition-on-y}
\end{equation}

We now bound $\Delta_y$.

We expand this term by conditioning on $A \in \{0,1\}$ and writing the model's conditional prediction rates as $P_{y,a} = \Pr[\hat{Y}=1 \mid Y=y, A=a]$ where $\Delta P_{y,a} = \Pr[A=a \mid Y=y, D_1] - \Pr[A=a \mid Y=y, D_0]$:

\begin{equation}
    \Delta_y = \sum_{a \in \{0,1\}} P_{y,a} \cdot \Delta P_{y,a}.
    \label{eqn:Deltay}
\end{equation}

Since $\Pr[A=0 \mid Y=y,D] + \Pr[A=1 \mid Y=y,D] = 1$, the differences satisfy\footnote{See Appendix for intuition on this difference.}
\[
\Delta P_{y,0} = -\Delta P_{y,1} = -\Delta P_y.
\]

We introduce the residual unfairness $\delta_y$ through the Equalized Odds constraint:
\[
P_{y,1} = P_{y,0} + \delta_y,
\qquad
|\delta_y| \le \Delta_{\text{EO}}.
\]

Substituting this relationship into $\Delta_y$ from Equation~\ref{eqn:Deltay}:
\begin{align*}
\Delta_y
&=
P_{y,0} \cdot (-\Delta P_y)
+
(P_{y,0} + \delta_y) \cdot (\Delta P_y)
\\
&=
- \cancel{P_{y,0} \Delta P_y}
+
\cancel{P_{y,0} \Delta P_y}
+
\delta_y \Delta P_y
\\
&=
\delta_y \Delta P_y.
\end{align*}

The first line expands $\Delta_y$ by substituting $P_{y,1} = P_{y,0} + \delta_y$:
the group-0 contribution is $P_{y,0}\cdot(-\Delta P_y)$ (its prediction rate
weighted by how much group-0's conditional prevalence \emph{decreases} from
$D_0$ to $D_1$), and the group-1 contribution is $(P_{y,0}+\delta_y)\cdot
(\Delta P_y)$ (weighted by how much group-1's prevalence \emph{increases}).
The $P_{y,0}\,\Delta P_y$ terms cancel exactly, leaving $\delta_y\,\Delta P_y$.
This cancellation is the central insight of the proof: group 0's prediction rate $P_{y,0}$ contributes equally to both distributions and drops
out entirely. We see that leakage is governed solely by $\delta_y$ (the residual
per-label EO gap) scaled by $\Delta P_y$, which is the compositional shift between
$D_0$ and $D_1$. In particular, a perfectly fair model ($\delta_y = 0$)
leaks nothing regardless of its accuracy, while an unfair model leaks in
direct proportion to both its unfairness and the distributional shift.

Thus,
\begin{align}
    |\Delta_y| &\le |\delta_y| \cdot |\Delta P_y| \nonumber \\
    &\le \Delta_{\textup{EO}} \cdot |\Delta P_y|. \nonumber
\end{align}

Summing over $y$ yields the final bound:
\begin{align}
\text{Adv}\left(\mathcal{A}, M_f\right) &\le \sum_{y \in \{0,1\}}
\Pr[Y=y] \cdot \Delta_y \qquad \text{(Equation~\ref{eqn:condition-on-y})} \nonumber \\
&\le \sum_{y \in \{0,1\}}
\Pr[Y=y] \cdot \Delta_{EO} \cdot |\Delta P_y| \nonumber \\
&= \Delta_{\textup{EO}} \cdot \underbrace{\left(
\sum_{y \in \{0,1\}}
\Pr[Y=y]\, |\Delta P_y|
\right)}_{=\,W} \nonumber \\
&\le \Delta_{\textup{EO}} \cdot W \;\le\; \Delta_{\textup{EO}}.
\label{eqn:tight-bound}
\end{align}

The intermediate quantity $W \le 1$ is attained with equality when each $|\Delta P_y|=1$, i.e.\ when $D_0$ and $D_1$ are pure single-group distributions (the biased setup established in our experiments). This concludes the proof.

\end{proof}

Theorem~\ref{thm:eqodds} demonstrates the effect of fairness through fine-tuning. By forcing the fine-tuned model $M_f$ to adhere to the Equalized Odds constraint, our method bounds the residual unfairness $\delta_y$, causing the base prediction rate $P_{y,0}$ to drop out of the leakage expression entirely. The adversarial advantage therefore depends only on $\delta_y$ — the sole surviving term after the algebraic cancellation. Consequently, the adversarial advantage is directly bounded by the product of the EO imperfection $\Delta_{\text{EO}}$ and the distributional shift weight $W$, formally establishing fair fine-tuning as a principled defense against distributional leakage.\footnote{The analysis extends naturally to multiple simultaneous sensitive attributes; see Appendix for the formal statement.}

\begin{remark}[Scope: PPR gap vs.\ accuracy gap]
\label{rem:ppr-scope}
Theorem~\ref{thm:eqodds} bounds the \emph{positive-prediction-rate} (PPR) gap $|\Pr[\hat{Y}=1\mid D_1]-\Pr[\hat{Y}=1\mid D_0]|$.  The Loss Test adversary observes the \emph{accuracy} gap, which differs from the PPR gap by a base-rate floor term $\Delta\pi\cdot(\mathrm{TPR}+\mathrm{FPR}-1)$ that persists even at $\Delta_{\mathrm{EO}}=0$ when the two groups have different label prevalences (derived in Appendix).  In all our experimental settings this floor is small relative to the reductions achieved by FFt; when base rates diverge sharply (LSAC \texttt{race}) the floor dominates and explains the residual gap.  The bound is therefore informative for the accuracy gap whenever $|\Delta\pi|$ is moderate.
\end{remark}

\begin{remark}[Diagnostic role of $W$]
\label{rem:W}
$W = \sum_y \Pr[Y=y]|\Delta P_y|$ is estimable from labeled samples of $D_0$ and $D_1$, both of which are available to the defender under the standard threat model (both $\mathcal{G}_0$ and $\mathcal{G}_1$ are public).  When $W$ is small---e.g.\ the two training distributions share nearly the same conditional composition of $A$---the adversarial advantage is already small regardless of $\Delta_{\textup{EO}}$, and FFt may be unnecessary.  Conversely, when $W\!\approx\!1$ (biased distributions), the bound tightens to $\text{Adv} \le \Delta_{\textup{EO}}$.
\end{remark}

\begin{proposition}[Tightness]
\label{prop:tight}
The bound $\textup{Adv}(\mathcal{A}, M_f) \le \Delta_{\textup{EO}} \cdot W$ is tight.  Consider the deterministic model $M^*$ with $\Pr[\hat{Y}=1 \mid Y=y, A=0]=0$ and $\Pr[\hat{Y}=1 \mid Y=y, A=1]=1$ for all $y$.  Then $\Delta_{\textup{EO}}=1$.  Under biased distributions with $W=1$, the adversary observes the group that achieves higher accuracy also achieves $\textup{Adv}(\mathcal{A}, M^*)= \Delta_{\textup{EO}} \cdot W=1$.
\end{proposition}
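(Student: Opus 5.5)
The plan is to establish tightness in two parts. First, we show the upper bound of Theorem~\ref{thm:eqodds} holds for $M^*$ with $\Delta_{\textup{EO}}\cdot W=1$. Second, we exhibit a concrete adversary whose advantage equals $1$. Since $\textup{Adv}$ is an absolute difference of probabilities, it is always at most $1$. So once both parts are in place, the bound is attained with equality and cannot be improved by any constant factor.

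\textbf{Step 1: computing $\Delta_{\textup{EO}}$ and $W$.} For each $y\in\{0,1\}$ we have $P_{y,0}=\Pr[\hat{Y}=1\mid Y=y,A=0]=0$ and $P_{y,1}=1$. The per-label gap is therefore $|P_{y,0}-P_{y,1}|=1$, and taking the maximum over $y$ gives $\Delta_{\textup{EO}}=1$. For $W$ we use the biased protocol, in which $D_0$ is supported on $A=0$ and $D_1$ on $A=1$. Then $\Pr[A=1\mid Y=y,D_1]=1$ and $\Pr[A=1\mid Y=y,D_0]=0$, so $|\Delta P_y|=1$ for both labels. Hence $W=\sum_y\Pr[Y=y]\cdot 1=1$, and the right-hand side of the bound is $\Delta_{\textup{EO}}\cdot W=1$.

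\textbf{Step 2: an adversary achieving advantage $1$.} We use the identity from the proof of Theorem~\ref{thm:eqodds}, $\Delta_y=\delta_y\,\Delta P_y$, with $\delta_y=P_{y,1}-P_{y,0}=1$ and $\Delta P_y=1$. This gives $\Delta_y=1$ for both labels. Summing, the positive-prediction rate of $M^*$ is $1$ on $D_1$ and $0$ on $D_0$, so the PPR gap is exactly $1$.

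The adversary queries $M^*$ on a point drawn from the public distribution, or on any point with $A=1$. It outputs $\hat b=1$ iff $\hat{Y}=1$. We will view the deployed $M^*$ as the function realised after training on $\mathcal{S}_b$, i.e.\ predicting $1$ exactly on the group that was trained on. Under that reading, $\Pr[\hat b=1\mid M\leftarrow\mathcal{S}_1]=1$ and $\Pr[\hat b=1\mid M\leftarrow\mathcal{S}_0]=0$, which gives $\textup{Adv}(\mathcal{A},M^*)=1=\Delta_{\textup{EO}}\cdot W$.

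For the accuracy-based reading in the statement, the group trained on is also the group with accuracy disparity. Because $M^*$ is deterministic in $A$, the accuracy on each group is simply that group's label prevalence. A Loss-Test adversary comparing accuracies on the two groups can therefore perfectly identify which group was favoured, and again $\textup{Adv}=1$.

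\textbf{Main obstacle.} The delicate point is interpreting $\textup{Adv}$ consistently. Its definition compares the adversary's output across the two training worlds. $M^*$, however, is specified as a single fixed model, whose conditional rates $P_{y,a}$ do not depend on $D_b$. Under that literal reading, a model queried identically in both worlds gives $\textup{Adv}=0$. We will resolve this exactly as the proof of Theorem~\ref{thm:eqodds} does: there, $\textup{Adv}$ is identified with the PPR gap between $D_1$ and $D_0$ for the released model, so the adversary effectively evaluates $M^*$ on the distribution tied to the secret bit. With this convention fixed, Steps 1 and 2 are direct calculations. We will also note, via Remark~\ref{rem:ppr-scope}, that the base-rate floor term vanishes here because the PPR gap is already maximal.
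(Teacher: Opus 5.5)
Your Step~1 and the PPR computation at the start of Step~2 match the paper's entire justification, which is only the one line inside the statement. With $\textup{Adv}$ identified with the PPR gap (the convention of the proof of Theorem~\ref{thm:eqodds}), $M^*$ predicts $\hat{Y}=1$ exactly when $A=1$, so $\Pr[\hat{Y}=1\mid D_1]-\Pr[\hat{Y}=1\mid D_0]=1-0=1=\Delta_{\textup{EO}}\cdot W$. You are right that under the literal Definition~1 a single fixed model released in both worlds gives advantage $0$. The paper never addresses this weakness of the statement. The PPR convention is the only reading under which the proposition holds, and on that reading your argument is complete.

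The remaining material is wrong and should be cut.

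\textbf{The adversary in Step~2.} It is run against a $b$-dependent pair of models (predict $1$ on whichever group was trained on). In the world $b=0$ this model has $P_{y,0}=1$ and $P_{y,1}=0$, so it is not $M^*$, and it contradicts the fixed-$M^*$ convention of your last paragraph. Moreover, only the query on an $A=1$ point works: a query drawn from the public mixture gives advantage $|2\Pr[A=1]-1|$.

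\textbf{The accuracy-based paragraph.} Since $M^*$ ignores $Y$, its accuracy is $\pi_1=\Pr[Y=1\mid D_1]$ on $D_1$ but $1-\pi_0$ on $D_0$. The Loss-Test gap is therefore $|\pi_0+\pi_1-1|$. This is below $1$ unless both groups carry the same degenerate label, and it equals $0$ when, e.g., $\pi_0=\pi_1=1/2$. So the trained group need not be the more accurate one, and a Loss-Test adversary does not attain advantage $1$. Even against your $b$-dependent pair it succeeds only when $\pi_0+\pi_1\neq 1$.

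This is precisely the PPR-versus-accuracy decoupling of Remark~\ref{rem:ppr-scope}. The base-rate floor there is derived under $\Delta_{\textup{EO}}=0$, so it does not apply to $M^*$, and it does not ``vanish because the PPR gap is maximal.'' Tightness holds for the PPR gap, not for the accuracy-observing adversary that the statement's wording alludes to.
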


\begin{corollary}[Biased Distribution]
\label{cor:biased}
Under the biased distribution protocol where $\mathcal{G}_0$ and $\mathcal{G}_1$ are pure single-demographic groups, $|\Delta P_y|=1$ for all $y$, so $W=1$ and the bound of Theorem~\ref{thm:eqodds} reduces to
\[
\textup{Adv}(\mathcal{A}, M_f) \;\le\; \Delta_{\textup{EO}}.
\]
By Proposition~\ref{prop:tight} this bound is tight.  The biased protocol is therefore the \underline{\emph{worst case}} for the defender: any FFt that controls $\Delta_{\textup{EO}}$ under pure groups is guaranteed to succeed under any mixed-group protocol with $W<1$.
\end{corollary}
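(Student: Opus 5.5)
The corollary follows from Theorem~\ref{thm:eqodds} by computing $W$ under the biased protocol and then citing Proposition~\ref{prop:tight}. The worst-case claim is then a monotonicity statement in $W$.

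First, compute $\Delta P_y$ under pure groups. Say $\mathcal{G}_0$ contains only $A=0$ and $\mathcal{G}_1$ only $A=1$; the other assignment is symmetric. Then for each $y$ with $\Pr[Y=y\mid D_b]>0$ we have $\Pr[A=1\mid Y=y,D_1]=1$ and $\Pr[A=1\mid Y=y,D_0]=0$, so $\Delta P_y=1$. The only subtlety is a label $y$ absent from one of the groups, where the conditional is undefined. I would handle this by assuming, as the protocol implicitly does, that both labels occur in both groups; equivalently, I would adopt the convention that the conditional composition is still the pure group's attribute value. With that, $W=\sum_y \Pr[Y=y]\cdot 1=1$, since the label marginal sums to one. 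Substituting into Theorem~\ref{thm:eqodds} gives $\textup{Adv}(\mathcal{A},M_f)\le \Delta_{\textup{EO}}$.

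Second, tightness. Proposition~\ref{prop:tight} exhibits $M^*$ with $\Delta_{\textup{EO}}=1$ under $W=1$ and $\textup{Adv}(\mathcal{A},M^*)=1$, so equality is attained. I would state this as tightness of the worst-case constant: no bound of the form $c\,\Delta_{\textup{EO}}$ with $c<1$ can hold uniformly under the biased protocol. Checking the advantage computation for $M^*$ is the one place I would verify the example directly. Under $D_0$ every prediction is $0$ and under $D_1$ every prediction is $1$, so the PPR gap is exactly $1$, which matches the quantity bounded in the theorem (see Remark~\ref{rem:ppr-scope}).

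Third, the worst-case statement. For any mixed protocol, each $|\Delta P_y|\le 1$ because it is a difference of two probabilities, so $W\le 1$. Hence $\textup{Adv}\le \Delta_{\textup{EO}}\cdot W\le \Delta_{\textup{EO}}$. Therefore any guarantee of the form $\Delta_{\textup{EO}}\le \tau$ certifies $\textup{Adv}\le\tau$ under any protocol with $W<1$, and even more strongly so when $W$ is smaller.

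The main obstacle is interpretive rather than technical. The claim ``guaranteed to succeed'' transfers only if the $\Delta_{\textup{EO}}$ achieved by FFt is the same across protocols. I would therefore phrase the conclusion as ``the bound obtained for a given $\Delta_{\textup{EO}}$ is weakest at $W=1$'', not as a claim that FFt attains the same fairness under every protocol.
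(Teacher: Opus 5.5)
Your proposal is correct and follows the paper's own route: pure groups force $|\Delta P_y|=1$ and hence $W=1$ (the paper notes this in the last line of the proof of Theorem~\ref{thm:eqodds}), tightness comes from the $M^*$ example of Proposition~\ref{prop:tight}, and the worst-case claim is just $W\le 1$. Your caveat that the transfer to mixed protocols holds only for a fixed value of $\Delta_{\textup{EO}}$ is well taken: the paper's argument assumes this without saying so, and its own mixed-distribution ablation (Figure~\ref{fig:ablation-mixed}) shows that $\Delta_{\textup{EO}}$ itself changes with the protocol.
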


\begin{theorem}[FFt Improvement Condition]
\label{thm:condition}
Let $M_{\textup{base}}$ be the baseline model and $M_f$ the FFt model. FFt reduces adversarial advantage (i.e.\ $\textup{Adv}(\mathcal{A}, M_f) < \textup{Adv}(\mathcal{A}, M_{\textup{base}})$) whenever
\[
\Delta_{\textup{EO}}^{\textup{fft}} \cdot W^{\textup{fft}} < \Delta_{\textup{EO}}^{\textup{base}} \cdot W^{\textup{base}}.
\]
In the biased distribution setting $W^{\textup{base}} = W^{\textup{fft}} = 1$, so the condition simplifies to
$\Delta_{\textup{EO}}^{\textup{fft}} < \Delta_{\textup{EO}}^{\textup{base}}$.  The practical value of this characterisation is its identification of \emph{catastrophic forgetting} as the principal failure mode: if fine-tuning on $\mathcal{G}_1$ causes the model to lose its calibration on $\mathcal{G}_0$, then $\Delta_{\textup{EO}}^{\textup{fft}}$ rises rather than falls, and the bound of Theorem~\ref{thm:eqodds} no longer guarantees improvement.  This directly motivates the rehearsal component of Algorithm~\ref{alg:fft}.
\end{theorem}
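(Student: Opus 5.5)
The plan is to sandwich the two advantages: bound $\textup{Adv}(\mathcal{A},M_f)$ from above with Theorem~\ref{thm:eqodds}, and bound $\textup{Adv}(\mathcal{A},M_{\textup{base}})$ from below by the same quantity evaluated at the base model. The starting point is that the proof of Theorem~\ref{thm:eqodds} contains an exact identity, not only an inequality. After the cancellation of $P_{y,0}$, the PPR gap of any model $M$ satisfies
\[
\Pr[\hat Y=1\mid D_1]-\Pr[\hat Y=1\mid D_0] \;=\; \sum_{y\in\{0,1\}} \Pr[Y=y]\,\delta_y(M)\,\Delta P_y ,
\]
where $\delta_y(M)=P_{y,1}-P_{y,0}$ is the signed per-label EO gap of $M$. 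I will restate this identity as a lemma, applicable to both $M_{\textup{base}}$ and $M_f$. Following Remark~\ref{rem:ppr-scope}, I identify $\textup{Adv}$ with the absolute value of this PPR gap.

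For the upper bound, I apply Theorem~\ref{thm:eqodds} to $M_f$ directly, with $W^{\textup{fft}}$ computed from the label and attribute composition relevant to $M_f$:
\[
\textup{Adv}(\mathcal{A},M_f)\le \Delta^{\textup{fft}}_{\textup{EO}}\, W^{\textup{fft}}.
\]

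For the base model, I need the reverse inequality
\[
\textup{Adv}(\mathcal{A},M_{\textup{base}})\ge \Delta^{\textup{base}}_{\textup{EO}}\, W^{\textup{base}}.
\]
The identity turns the triangle-inequality step of Theorem~\ref{thm:eqodds} into an equality exactly when two conditions hold:
\begin{itemize}
\item[(i)] the terms $\delta_y\,\Delta P_y$ share a common sign across $y\in\{0,1\}$, so no cancellation occurs;
\item[(ii)] $|\delta_y|=\Delta^{\textup{base}}_{\textup{EO}}$ for both labels.
\end{itemize}
Under these conditions $\textup{Adv}(\mathcal{A},M_{\textup{base}})=\Delta^{\textup{base}}_{\textup{EO}}W^{\textup{base}}$, in the same way that Proposition~\ref{prop:tight} attains the bound with $M^*$. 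Chaining the two displays with the hypothesis $\Delta^{\textup{fft}}_{\textup{EO}}W^{\textup{fft}}<\Delta^{\textup{base}}_{\textup{EO}}W^{\textup{base}}$ then gives the strict improvement. The biased specialisation follows at once from Corollary~\ref{cor:biased}: $W^{\textup{base}}=W^{\textup{fft}}=1$, so the condition collapses to $\Delta^{\textup{fft}}_{\textup{EO}}<\Delta^{\textup{base}}_{\textup{EO}}$.

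The main obstacle is the base-model lower bound. Theorem~\ref{thm:eqodds} supplies only an upper bound, so the hypothesis on its own compares two upper bounds and cannot by itself force a strict decrease in advantage. The identity shows why: an unfair base model whose gaps $\delta_0$ and $\delta_1$ have opposite signs can have near-zero advantage. I will therefore do one of two things.
\begin{itemize}
\item Add the alignment hypothesis (i)--(ii) explicitly. In the biased setting, where $\Delta P_y=\pm1$ with a common sign, this reduces to $\delta_0,\delta_1$ having equal sign and magnitude $\Delta^{\textup{base}}_{\textup{EO}}$. This is the regime a baseline trained on a single pure group naturally produces: it systematically favours its training group at both labels.
\item Alternatively, weaken the conclusion to a statement about the certified bound, namely that the guaranteed leakage of $M_f$ is strictly smaller than the worst-case leakage compatible with the base model's EO disparity. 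Tightness (Proposition~\ref{prop:tight}) shows this worst case is attained.
\end{itemize}

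Finally, I will address the catastrophic-forgetting remark. If fine-tuning on $\mathcal{G}_1$ degrades $P_{y,0}$ while raising $P_{y,1}$, then $|\delta_y|$ grows and $\Delta^{\textup{fft}}_{\textup{EO}}$ can exceed $\Delta^{\textup{base}}_{\textup{EO}}$. In that case the hypothesis fails and Theorem~\ref{thm:eqodds} offers no guarantee. This is a contrapositive reading rather than a separate claim, and it motivates rehearsal.
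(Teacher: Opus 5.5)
Your diagnosis is correct, and it is where you and the paper part ways. The paper's proof sketch applies Theorem~\ref{thm:eqodds} to both models. It then asserts that whenever $\Delta_{\textup{EO}}^{\textup{fft}}W^{\textup{fft}}<\Delta_{\textup{EO}}^{\textup{base}}W^{\textup{base}}$ ``the bound guarantees improvement''. That compares two upper bounds and does not yield $\textup{Adv}(\mathcal{A},M_f)<\textup{Adv}(\mathcal{A},M_{\textup{base}})$.

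The statement as written is in fact false. Take the biased setting with $\Pr[Y=1]=\tfrac12$. A base model with $P_{0,0}=P_{1,1}=0.25$ and $P_{0,1}=P_{1,0}=0.75$ has $\Delta_{\textup{EO}}^{\textup{base}}=0.5$, but since $\delta_0=-\delta_1$ its PPR gap is zero. An FFt model with $\delta_0=\delta_1=0.1$ has $\Delta_{\textup{EO}}^{\textup{fft}}=0.1$ and advantage $0.1$. The paper's own COMPAS row is an empirical instance of the same thing: $\Delta_{\textup{EO}}$ falls from $37.5\%$ to $15.4\%$ while the measured gap rises from $2.0\%$ to $3.4\%$.

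So an extra hypothesis or a weaker conclusion is unavoidable. Under (i)--(ii), your chain $\textup{Adv}(\mathcal{A},M_f)\le\Delta_{\textup{EO}}^{\textup{fft}}W^{\textup{fft}}<\Delta_{\textup{EO}}^{\textup{base}}W^{\textup{base}}=\textup{Adv}(\mathcal{A},M_{\textup{base}})$ is a valid proof of the repaired statement. When you state the signed identity as a lemma, list the assumptions it silently inherits from the proof of Theorem~\ref{thm:eqodds}:
\begin{itemize}
\item a label marginal $\Pr[Y=y]$ shared by $D_0$ and $D_1$;
\item rates $P_{y,a}$ that do not depend on which of $D_0,D_1$ is being evaluated.
\end{itemize}

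Three refinements.

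\textbf{Condition (ii) is stronger than needed.} It forces the TPR and FPR gaps to have exactly equal magnitude, which essentially never holds. The natural sufficient condition, which your identity makes computable, is
$\Delta_{\textup{EO}}^{\textup{fft}}W^{\textup{fft}}<\bigl|\sum_{y}\Pr[Y=y]\,\delta_y^{\textup{base}}\,\Delta P_y\bigr|=\textup{Adv}(\mathcal{A},M_{\textup{base}})$.
Your case (i)--(ii) is a special instance of it. Under (i) alone, this condition is implied by $\Delta_{\textup{EO}}^{\textup{fft}}W^{\textup{fft}}<\min_y|\delta_y^{\textup{base}}|\,W^{\textup{base}}$.

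\textbf{Your justification for (i) is doubtful.} You argue that (i) is the regime a single-group baseline naturally produces. But a model that is simply more accurate on its training group has $\textup{TPR}_0>\textup{TPR}_1$ and $\textup{FPR}_0<\textup{FPR}_1$, i.e.\ $\delta_1<0<\delta_0$. That is exactly the cancelling case, and the counterexample above is of this kind. So the alignment hypothesis is a genuine restriction, not a benign one.

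\textbf{Your certified-bound alternative needs a general construction.} Proposition~\ref{prop:tight} only exhibits attainment at $\Delta_{\textup{EO}}=W=1$. For general values, use $\delta_y=\Delta_{\textup{EO}}^{\textup{base}}\,\mathrm{sign}(\Delta P_y)$ to show the worst case is attained.
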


\begin{proof}[Proof sketch]
Apply Theorem~\ref{thm:eqodds} to both models:
$\textup{Adv}(M_f) \le \Delta_{\textup{EO}}^{\textup{fft}} \cdot W^{\textup{fft}}$ and $\textup{Adv}(M_{\textup{base}}) \le \Delta_{\textup{EO}}^{\textup{base}} \cdot W^{\textup{base}}$.  Whenever the right-hand side for $M_f$ is strictly smaller, the bound guarantees improvement.  Equality between $W$ values holds in the biased setup because both distributions are pure groups, so $|\Delta P_y|=1$ for all $y$ under both models.
\end{proof}

\begin{proposition}[Group-Size Failure]
\label{prop:groupsize}
Let $\alpha = |\mathcal{S}_1|/|\mathcal{S}_0|$ be the fine-tuning-to-training size ratio.  Under rehearsal-based FFt with fraction $\rho$, each fine-tuning batch draws gradient signal from $\mathcal{G}_1$ at weight $1/(1{+}\rho)$.  As $\alpha \to 0$, the $\mathcal{G}_1$ signal is insufficient to recalibrate the $\mathcal{G}_0$-fitted decision boundary, so $\Delta_{\textup{EO}}^{\textup{fft}} \not< \Delta_{\textup{EO}}^{\textup{base}}$; by the contrapositive of Theorem~\ref{thm:condition}, FFt then provides no improvement.  Empirically, LSAC \texttt{race} ($\alpha\approx0.19$, 5.3:1 size asymmetry) illustrates this regime: FFt halves the adversarial gap but cannot close it (results shown in Appendix).
\end{proposition}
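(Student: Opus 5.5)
The statement to prove is Proposition~\ref{prop:groupsize}. It is partly heuristic: ``the $\mathcal{G}_1$ signal is insufficient'' is not a formal hypothesis. So I would first make it precise and then reduce it to Theorem~\ref{thm:condition}. Concretely, I would model FFt as gradient descent on a rehearsal-mixed empirical objective. Each batch contains a fraction $1/(1+\rho)$ of $\mathcal{G}_1$ samples and a fraction $\rho/(1+\rho)$ replayed from $\mathcal{S}_0$. Because there are only $|\mathcal{S}_1| = \alpha|\mathcal{S}_0|$ fresh examples, the number of distinct $\mathcal{G}_1$ points the fine-tuning can exploit scales with $\alpha$. I would then state the claim as: there is a threshold $\alpha_0$ such that for $\alpha<\alpha_0$ the fine-tuned parameters stay in a neighbourhood of the base parameters in which $\Delta_{\textup{EO}}$ does not decrease.

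The key steps, in order, are as follows.
\begin{enumerate}
\item Write the EO-penalised rehearsal loss as $L = \frac{1}{1+\rho}\hat L_{\mathcal{G}_1} + \frac{\rho}{1+\rho}\hat L_{\mathcal{G}_0} + \lambda\,\widehat{\Delta}_{\textup{EO}}$. Note that the penalty $\widehat{\Delta}_{\textup{EO}}$ is estimated from both groups, and its group-1 rates $P_{y,1}$ are estimated from only $O(\alpha|\mathcal{S}_0|)$ points.
\item Argue that the estimation error of $P_{y,1}$ scales as $O\big(1/\sqrt{\alpha|\mathcal{S}_0|\Pr[Y=y]}\big)$, by Hoeffding's inequality. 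As $\alpha\to 0$ this error swamps the true disparity. The penalty gradient therefore carries no reliable descent direction for the population quantity $\Delta_{\textup{EO}}$, while the rehearsal term keeps the $\mathcal{G}_0$-fitted decision boundary in place.
\item Conclude that in this limit $\Delta_{\textup{EO}}^{\textup{fft}} \ge \Delta_{\textup{EO}}^{\textup{base}}$ up to vanishing terms, so $\Delta_{\textup{EO}}^{\textup{fft}} \not< \Delta_{\textup{EO}}^{\textup{base}}$.
\item In the biased setting, where $W^{\textup{fft}}=W^{\textup{base}}=1$ by Corollary~\ref{cor:biased}, the sufficient condition of Theorem~\ref{thm:condition} fails.
\end{enumerate}

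Step 4 hides a logical gap. Theorem~\ref{thm:condition} gives only a sufficient condition, because both advantages are merely upper-bounded by Theorem~\ref{thm:eqodds}. Its ``contrapositive'' therefore does not yield $\textup{Adv}(\mathcal{A},M_f)\ge\textup{Adv}(\mathcal{A},M_{\textup{base}})$. To close this I would invoke the tightness mechanism of Proposition~\ref{prop:tight}. In the biased protocol the derivation of Theorem~\ref{thm:eqodds} shows that the PPR gap equals $\big|\sum_y \Pr[Y=y]\,\delta_y\,\Delta P_y\big|$ with $|\Delta P_y|=1$. When the residuals $\delta_y$ share a sign, this is $\sum_y \Pr[Y=y]|\delta_y|$, a quantity monotone in the per-label disparities. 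Under that sign assumption, non-decrease of the $|\delta_y|$ transfers directly to non-decrease of the advantage, and I would state this as the rigorous version of ``no improvement.'' Otherwise, I would weaken the conclusion to ``Theorem~\ref{thm:condition} no longer certifies improvement,'' which follows immediately.

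I expect Step 2 to be the main obstacle. It requires showing that small-sample noise in the EO penalty together with the rehearsal anchor prevents any reduction in population $\Delta_{\textup{EO}}$, and this depends on the optimiser and the model class. My first attempt would be a linear or logistic model near the base optimum. There, a local quadratic expansion shows the parameter update is dominated by the $\rho/(1+\rho)$-weighted $\mathcal{G}_0$ curvature as $\alpha\to0$, and the update size tends to zero. The LSAC \texttt{race} observation ($\alpha\approx0.19$) would then serve only as an empirical illustration, not as part of the proof.
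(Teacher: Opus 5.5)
You are right that the paper gives no argument beyond the sentence inside the statement. It asserts that the $\mathcal{G}_1$ signal is ``insufficient'' and then appeals to the ``contrapositive'' of Theorem~\ref{thm:condition}. That appeal is really the inverse of a merely sufficient condition, so it cannot yield $\textup{Adv}(\mathcal{A},M_f)\ge\textup{Adv}(\mathcal{A},M_{\textup{base}})$. Your fallback (``Theorem~\ref{thm:condition} no longer certifies improvement'') is therefore the strongest conclusion the paper's own reasoning supports.

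Your sign-assumption repair does not get further as written, for two reasons. First, it needs every $|\delta_y|$ to be non-decreasing, but Step~3 only controls $\max_y|\delta_y|$, and $\sum_y\Pr[Y=y]|\delta_y|$ can fall while the max does not. Second, writing the PPR gap as $\bigl|\sum_y\Pr[Y=y]\,\delta_y\,\Delta P_y\bigr|$ presupposes that $\Pr[Y=y]$ is the same under $D_0$ and $D_1$. With pure groups and $\Delta\pi\neq0$, an extra term $\Delta\pi\,(P_{1,0}-P_{0,0})$ appears that does not involve $\delta_y$ at all.

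The genuine gap is in Steps~2--3 and the mechanism you propose for them. By your own Step~1, and because Algorithm~\ref{alg:fft} replays only $\lfloor\rho|\mathcal{S}_1|\rfloor$ points, the $\mathcal{G}_1$ and $\mathcal{G}_0$ terms carry weights $1/(1+\rho)$ and $\rho/(1+\rho)$ \emph{independently of} $\alpha$. Shrinking $\alpha$ shrinks the rehearsal buffer together with $\mathcal{S}_1$; if instead $\alpha\to0$ because $|\mathcal{S}_0|$ grows, the fine-tuning set does not change at all. With the averaged loss this weighting presupposes, the $\mathcal{G}_1$ gradient at the warm start has an $\alpha$-independent, generically nonzero mean, plus noise of order $1/\sqrt{\alpha|\mathcal{S}_0|}$. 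Your local quadratic expansion therefore gives a displacement that does not vanish as $\alpha\to0$, and nothing makes the $\mathcal{G}_0$ curvature dominate.

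Moreover, ``no reliable descent direction'' only says the change in population $\Delta_{\textup{EO}}$ is uncontrolled, not that it is nonnegative. The base model was trained with no fairness term and is generically not a local minimiser of $\Delta_{\textup{EO}}$. So the neighbourhood ``in which $\Delta_{\textup{EO}}$ does not decrease'' that you posit does not exist, and a noisy update can lower $\Delta_{\textup{EO}}$. The paper's own example shows exactly this. On LSAC \texttt{race} ($\alpha\approx0.19$, run without rehearsal), $\Delta_{\textup{EO}}$ falls from $99.9\%$ to $68.7\%$ and the gap halves, which contradicts the stated conclusion rather than illustrating it. In addition, ``$\ge$ up to vanishing terms'' cannot deliver the strict $\not<$.

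What can honestly be proved is one of two things:
\begin{itemize}
\item the degenerate endpoint: $\alpha|\mathcal{S}_0|<1$ empties $\mathcal{S}_1$ and $\mathcal{R}$, so $M_f=M_{\textup{base}}$;
\item a certification limit: from $\mathcal{S}_1$ alone, the group-1 rates, and hence any decrease of $\Delta_{\textup{EO}}$, cannot be verified to better than order $1/\sqrt{|\mathcal{S}_1|\Pr[Y=y\mid D_1]}$.
\end{itemize}
The second is the proper home for your Hoeffding rate, with a two-point argument supplying the matching lower bound, rather than a claim about the optimiser's trajectory. Anything stronger needs an explicit assumption on the fine-tuning map that essentially encodes the conclusion.
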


The $W$ factor connects theory to practice. Corollary~\ref{cor:biased} shows that the biased distribution protocol ($W=1$) is the \emph{worst case}: FFt's effectiveness depends entirely on how well it suppresses the EO gap. Theorem~\ref{thm:condition} and Proposition~\ref{prop:groupsize} together make this actionable---the defender should apply FFt when $W>0$ and the baseline $\Delta_{\text{EO}}$ is large, verify that $\alpha \ge 0.2$ to avoid the group-size failure regime, and expect diminishing returns when size asymmetry is severe.

\paragraph{FFt vs.\ plain data augmentation.}
A natural question is whether the protection offered by FFt arises merely from training on data from \emph{both} distributions i.e., from balanced data augmentation, rather than from the fairness constraint itself. The theoretical analysis above shows that the EO constraint is the driving force: Theorem~\ref{thm:eqodds} holds because $|\delta_y| \le \Delta_{\text{EO}}$, where $\delta_y$ is the \emph{residual unfairness} that survives after fine-tuning. Plain fine-tuning on the complementary distribution (without an EO penalty) does not guarantee small $\Delta_{\text{EO}}$ and therefore does not yield the bound $\text{Adv}(\mathcal{A},M_f) \le \Delta_{\text{EO}}$. The fairness regularization is the mechanism that suppresses the model's sensitivity to the distributional split, not the additional data per se.

\subsection{FFt Deployment Procedure}
\label{sec:algorithm}

Algorithm~\ref{alg:fft} operationalises the theoretical results above into a practical deployment procedure. A rehearsal buffer (mixing $\rho \cdot |\mathcal{S}_1|$ examples from $\mathcal{S}_0$ into fine-tuning) prevents the catastrophic forgetting that would inflate $\Delta_{\text{EO}}^{\text{fft}}$ above $\Delta_{\text{EO}}^{\text{base}}$.

\begin{algorithm}[t]
\caption{Fair Fine-Tuning (FFt) — Deployment Procedure}
\label{alg:fft}
\begin{algorithmic}[1]

\Require{%
  \begin{tabular}[t]{@{}ll@{}}
    $M_{\text{base}}$                        & baseline model trained on $\mathcal{G}_0$ \\
    $S_0^{\text{test}},\,S_1^{\text{test}}$  & held-out test sets \\
    $\mathcal{S}_1 \sim \mathcal{G}_1$       & fine-tuning set \\
    $\mathcal{S}_0 \sim \mathcal{G}_0$       & rehearsal source \\
    $\tau,\,\rho,\,T$                        & threshold, rehearsal fraction, iterations \\
  \end{tabular}}
\Ensure $M_f$ — fine-tuned model with reduced adversarial advantage

\medskip
\State $g \leftarrow \bigl|acc(M_{\text{base}},S_0^{\text{test}}) - acc(M_{\text{base}},S_1^{\text{test}})\bigr|$
\If{$g \le \tau$}
\State \Return $M_{\text{base}}$\Commentr{gap below threshold (no action needed)}
\EndIf

\medskip
\State $\mathcal{R} \leftarrow$ sample $\lfloor\rho\cdot|\mathcal{S}_1|\rfloor$ examples from $\mathcal{S}_0$
\State $\mathcal{S}_{\text{ft}} \leftarrow \mathcal{S}_1 \cup \mathcal{R}$\Commentr{prevents catastrophic forgetting}
\State $M_f \leftarrow M_{\text{base}}$\Commentr{warm-start from baseline}
\State Fine-tune $M_f$ on $\mathcal{S}_{\text{ft}}$ for $T$ steps minimising:
\Statex \qquad $\mathcal{L}_{\text{ft}} \;=\; \mathcal{L}_{\text{CE}} + \lambda\,\Delta_{\text{EO}}(M_f)$

\medskip
\State $g' \leftarrow \bigl|acc(M_f,S_0^{\text{test}}) - acc(M_f,S_1^{\text{test}})\bigr|$
\If{$g' \le \tau$}
\State \Return $M_f$\Commentr{gap below detection threshold}
\Else
    \State \textbf{warn:} augment $\mathcal{S}_1$ or reduce $|\mathcal{S}_0|/|\mathcal{S}_1|$ imbalance
    \State \Return $M_f$
\EndIf

\end{algorithmic}
\end{algorithm}

\section{Experiments}
\label{sec:experiments}

\subsection{Datasets and Experimental Setup.}

We evaluate on six datasets across three modalities: tabular (ACS Income, COMPAS, German Credit, LSAC), image (UTKFaces), and NLP (Bias in Bios), all under the \emph{biased distribution} protocol ($W=1$). LSAC results are presented separately in the Appendix.

\textbf{ACS Income 2018 (California).}
The American Community Survey (ACS) Income dataset~\citep{ding2021retiring} for California (1-Year, 2018) contains 195,665 individuals with 10 features. The binary target is income $>$\$50K. We experiment on \texttt{sex} (\texttt{male} vs.\ \texttt{female}) and \texttt{race} (\texttt{white} vs.\ \texttt{non-white}, i.e.\ RAC1P $\neq$ 1).

\textbf{COMPAS (ProPublica).}
The COMPAS recidivism dataset~\citep{angwin16compas} contains 6,172 criminal defendants after standard filtering~\citep{Suri2021FormalizingAE}. Features include age, sex, prior counts, juvenile offense counts, and charge degree. The binary target is two-year recidivism. We experiment on \texttt{race} (\texttt{African-American} vs.\ \texttt{Caucasian}).

\textbf{German Credit (UCI Statlog).}
The German Credit dataset~\citep{dua19} contains 1,000 credit applicants with 20 features (credit amount, duration, purpose, housing, employment, etc.). The binary target is creditworthiness (good/bad). We experiment on \texttt{sex} extracted from the \texttt{personal\_status} attribute (\texttt{male} vs.\ \texttt{female}).

\textbf{UTKFaces.}
The UTK Face dataset~\citep{zhifei2017cvpr} contains approximately 20,000 face images annotated with age (0--116), gender, and race (White, Black, Asian, Indian, Other). The binary target is age group (young: $\le\!30$, old: $>\!30$). We use \texttt{race} as the sensitive attribute ($\mathcal{G}_0$: White, $\mathcal{G}_1$: non-White). The model is a pretrained ResNet-18 fine-tuned end-to-end for 5 epochs; FFt uses 2 additional epochs with rehearsal ($\rho\!=\!0.2$).

\textbf{Bias in Bios.}
The Bias in Bios dataset~\citep{deArteaga2019BiasIB} contains $\sim\!400$K online biographies annotated with occupation (28 classes) and gender. We use the gender-scrubbed \texttt{hard\_text} field to prevent trivial leakage via pronoun cues, and treat \texttt{sex} as the sensitive attribute ($\mathcal{G}_0$: male, $\mathcal{G}_1$: female). We sample 5K biographies per gender group for training and 2K for test to keep memory tractable. The model is a TF-IDF (500 features, sublinear TF) $+$ MLP $(128, 64)$ trained on $\mathcal{G}_0$ biographies.

\subsection{Setup.}
We set $\mathcal{G}_0$ to one pure demographic group (e.g.\ white individuals) and $\mathcal{G}_1$ to the complementary group (e.g.\ non-white individuals). Both distributions are pure groups, so $W=1$ (Remark~\ref{rem:W}), and the bound $\text{Adv} \le \Delta_{\text{EO}}$ applies directly---making the EO constraint the operative control on leakage. We choose this protocol deliberately as the \emph{worst case for the defender}: pure groups maximise the distributional shift $|\Delta P_y|=1$, giving the adversary the strongest possible signal, so any defense that succeeds here is guaranteed to succeed under milder mixed-distribution protocols where $W<1$.

We use an 80--20\% train-test split, remove missing values, and normalize features (column-wise $\ell_2$ scaling, then row-wise $\ell_2$ normalization). Categorical columns (LSAC and German Credit) are one-hot encoded with column alignment between $\mathcal{G}_0$ and $\mathcal{G}_1$. We train a Multi-layer Perceptron (MLP) with three hidden layers $[32,16,8]$ for at most 500 iterations on $\mathcal{G}_0$ training data (200 for German Credit, whose one-hot-encoded 62-feature matrix converges in fewer iterations). This is the \underline{Baseline}. For FFt we warm-start from the Baseline weights and fine-tune on $\mathcal{G}_1$ for 100 additional iterations (50 for German Credit). For ACS, COMPAS, and German Credit we additionally mix 20\% of $\mathcal{G}_0$ training data into the fine-tuning batch (\emph{rehearsal}); refer to the Discussion Section for when rehearsal is appropriate. All results are averaged over 10 independent runs.

\subsection{Metrics.}

We evaluate against two black-box metrics from~\citep{Suri2021FormalizingAE}:

\begin{itemize}
    \item \underline{Loss Test:} The adversary computes test accuracy on both $S^{\text{test}}_0 \sim \mathcal{G}_0(\mathcal{D})$ and $S^{\text{test}}_1 \sim \mathcal{G}_1(\mathcal{D})$ and predicts $\hat{b} = \mathbb{I}[acc(M,S^{\text{test}}_0) < acc(M,S^{\text{test}}_1)]$. In this paper, we compute the adversarial accuracy gap $|acc(M,S^{\text{test}}_0) - acc(M,S^{\text{test}}_1)|$ which quantifies the strength of this signal; it is reported as the \emph{Adv gap} columns in Table~\ref{tab:main-gap}.

    \item \underline{Threshold Test:} The adversary predicts
    $\hat{b} = \mathbb{I}[|acc(M,S^{\text{test}}_0) - acc(M,S^{\text{test}}_1)| \le \tau]$.
    We set $\tau = 0.1$ following~\citep{Suri2021FormalizingAE}, i.e. gaps below 10\% are considered indistinguishable.  The \emph{${\le}\,\tau$?} column in Table~\ref{tab:main-gap} reports how many of the 10 runs fall below this threshold.
\end{itemize}

\subsection{Results}

Table~\ref{tab:main-gap} and Figure~\ref{fig:acs} summarise the adversarial accuracy gap $|acc(M, S_0^{\text{test}}) - acc(M, S_1^{\text{test}})|$ before and after FFt, together with the corresponding EO disparity $\Delta_{\text{EO}}$. We display per-run bar plots for UTKFaces and Bias in Bios in the Appendix. LSAC results are also shown in Appendix.

\paragraph{Reading Table~\ref{tab:main-gap}.}
Each row is one (dataset, sensitive attribute) pair.  The \emph{Adv gap} columns report the adversarial accuracy gap under the loss test~\citep{Suri2021FormalizingAE}: how much more accurately the adversary can classify samples from one group than the other, averaged over 10 random seeds.  A large gap means the model's behaviour leaks which demographic group it was trained on; a gap below $\tau=0.1$ is considered undetectable by the threshold test.  The $\Delta_{\text{EO}}$ columns report the equalized-odds disparity of the same model on the same test split.  The \emph{Bound holds?} column is the central empirical check for Theorem~\ref{thm:eqodds}: it verifies that $\text{Adv}(\mathcal{A}, M_f) \le \Delta_{\text{EO}}^{\text{fft}}$ holds after fine-tuning (with $W=1$ throughout, since all settings use the biased distribution protocol where $\mathcal{G}_0$ and $\mathcal{G}_1$ are pure demographic groups).

Three major findings stand out in Table~\ref{tab:main-gap}:

\emph{(1) The theoretical bound holds in every row.}  Across all settings, modalities (tabular, image, NLP), and sensitive attributes (sex and race), the post-FFt adversarial gap never exceeds the post-FFt EO disparity. The uniform empirical verification strongly supports Theorem~\ref{thm:eqodds}.

\emph{(2) FFt consistently controls adversarial leakage, with one exception.}  In five of six settings, the adversarial gap decreases after FFt: reductions range from $1.6\%$ (UTKFaces) to $11.8\%$ (ACS sex).  COMPAS is the exception: the gap \emph{increases} slightly from $2.0\%$ to $3.4\%$.  We are transparent about this: in COMPAS, the baseline has near-zero distributional leakage to begin with, and the EO penalty redistributes prediction errors across groups in a way that marginally widens the accuracy gap without crossing $\tau$ (all 10 seeds remain below $\tau$).  The Appendix base-rate floor analysis explains the mechanism: when $|\Delta\pi|$ is small and EO enforcement shifts the accuracy-EO decoupling regime, the accuracy gap can rise even as the PPR gap falls.  Crucially, the EO penalty still reduces $\Delta_{\text{EO}}$ from $37.5\%$ to $15.4\%$, tightening the formal guarantee of Theorem~\ref{thm:eqodds} by $2.4\times$ even in this setting.

\emph{(3) Rehearsal-based FFt reliably drives the gap below $\tau$.}  Across all settings in the main table (ACS, COMPAS, German Credit, Bios, UTKFaces), rehearsal-based FFt brings the gap below or near $\tau$ in the majority of runs, consistent with the practical guideline in Section~\ref{sec:discussion}.

\begin{table}[!th]
\caption{Adversarial accuracy gap and EO disparity (mean over 10 runs).
         $W\!=\!1$ for all settings (biased protocol).
         The bound $\text{Adv} \le \Delta_{\text{EO}}$ is empirically verified in every row.}
\label{tab:main-gap}
\centering
\resizebox{\columnwidth}{!}{%
\begin{tabular}{lcccccc}
\toprule
\multirow{2}{*}{\textbf{Setting ($\mathcal{G}_0 \to \mathcal{G}_1$)}} &
  \multicolumn{2}{c}{\textbf{Adv gap}} &
  \multicolumn{2}{c}{\textbf{$\Delta_{\text{EO}}$}} &
  \multirow{2}{*}{\textbf{${\le}\,\tau$?}} &
  \multirow{2}{*}{\textbf{Bound holds?}} \\
\cmidrule(lr){2-3}\cmidrule(lr){4-5}
 & Base & FFt & Base & FFt & & \\
\midrule
ACS \texttt{sex}\ (M\,$\to$\,F)          & $14.3\%$ & $\mathbf{2.5\%}$  & $78.0\%$ & $4.3\%$  & \cmark\ all & \cmark \\
ACS \texttt{race}\ (W\,$\to$\,NW)        & $15.3\%$ & $\mathbf{3.7\%}$  & $78.6\%$ & $5.5\%$  & \cmark\ all & \cmark \\
\midrule
COMPAS \texttt{race}\ (AA\,$\to$\,Cau)   & $\mathbf{2.0\%}$  & $3.4\%$  & $37.5\%$ & $15.4\%$ & \cmark\ all & \cmark \\
German \texttt{sex}\ (M\,$\to$\,F)       & $14.0\%$ & $\mathbf{6.0\%}$  & $28.8\%$ & $17.3\%$ & 8/10  & \cmark \\
\midrule
UTKFaces \texttt{race}\ (W\,$\to$\,NW)   & $7.1\%$  & $\mathbf{5.5\%}$  & $17.7\%$ & $15.5\%$ & \cmark\ all & \cmark \\
Bios \texttt{sex}\ (M\,$\to$\,F)         & $5.2\%$  & $\mathbf{0.9\%}$  & $9.1\%$  & $8.3\%$  & \cmark\ all & \cmark \\
\bottomrule
\end{tabular}}
\end{table}

\subsubsection{ACS Income (sex and race)}

FFt with the EO penalty reduces the gap to $2.5\%$ (sex) and $3.7\%$ (race), below $\tau$ in all 10 runs. The EO penalty drives $\Delta_{\text{EO}}$ down dramatically — from $\sim\!78\%$ at baseline to $4.3\%$ (sex) and $5.5\%$ (race) after FFt — making the bound $\text{Adv} \le \Delta_{\text{EO}}$ nearly tight.

\begin{figure*}[!th]
    \centering
    \begin{subfigure}{0.47\linewidth}
        \centering
        \includegraphics[width=\linewidth]{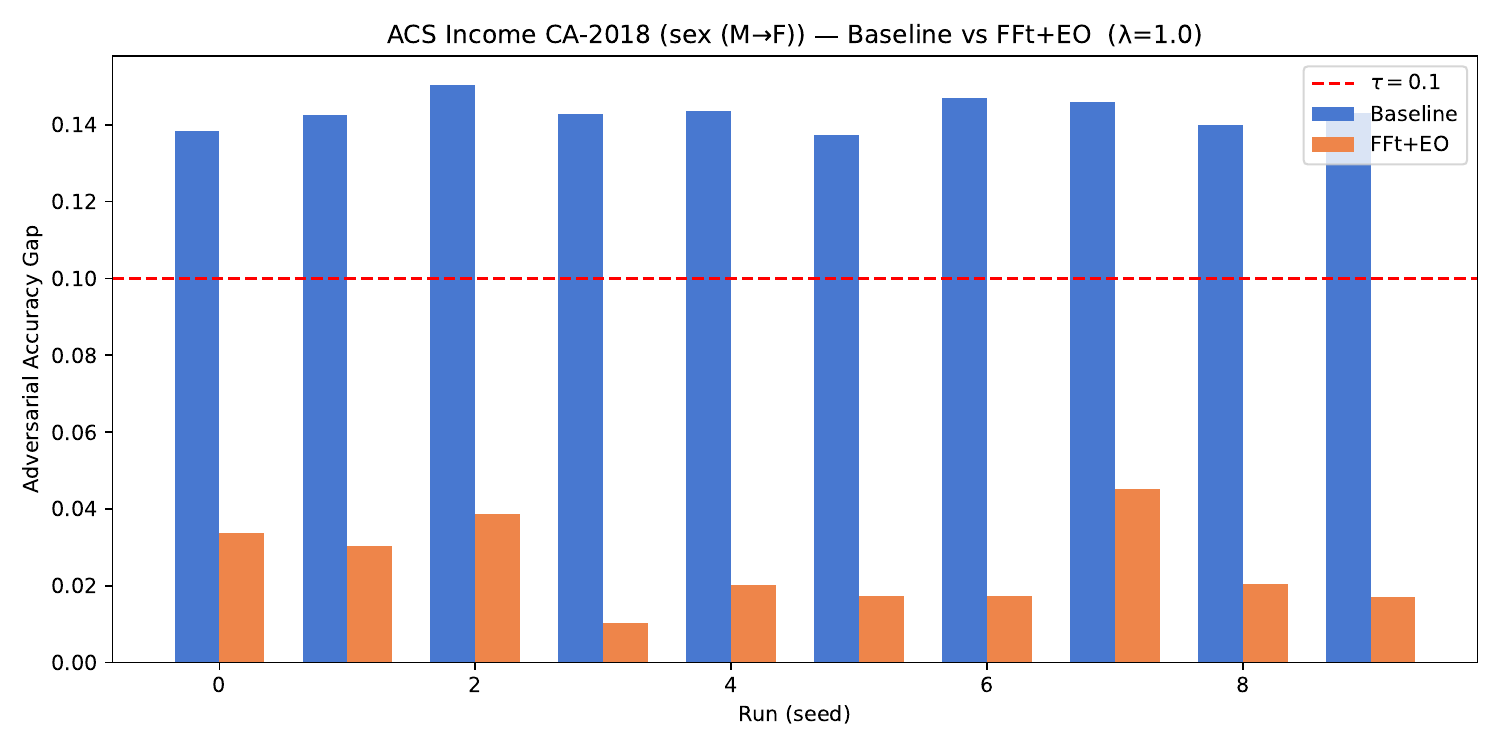}
        \caption{ACS (\texttt{sex}: male $\to$ female)}
        \label{fig:acs-sex}
    \end{subfigure}
    \hfill
    \begin{subfigure}{0.47\linewidth}
        \centering
        \includegraphics[width=\linewidth]{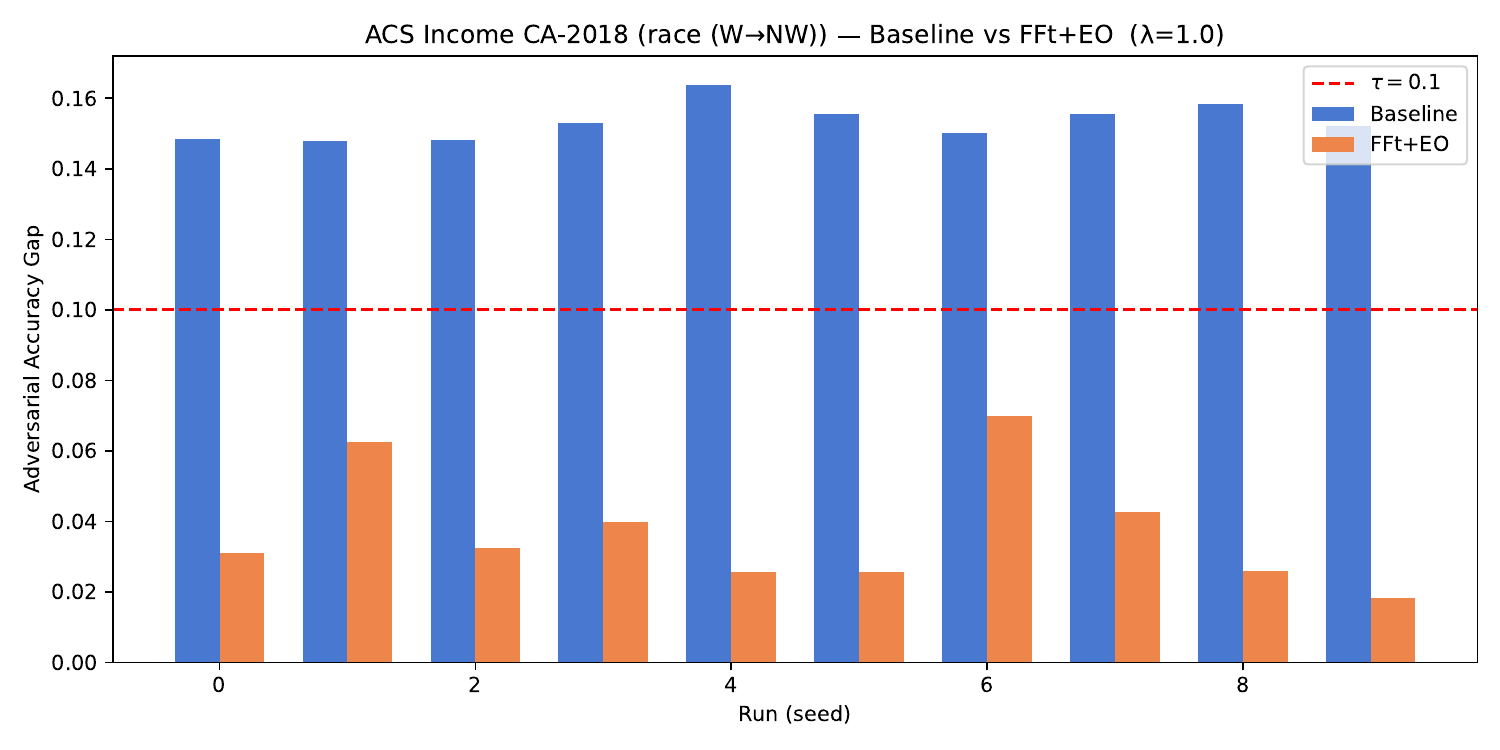}
        \caption{ACS (\texttt{race}: white $\to$ non-white)}
        \label{fig:acs-race}
    \end{subfigure}
    \caption{Adversarial accuracy gap for ACS Income CA-2018 across 10 runs. Rehearsal-based FFt brings the gap below $\tau=0.1$ in all 10 runs for both attributes.}
    \label{fig:acs}
\end{figure*}

\begin{figure*}[!th]
    \centering
    \begin{subfigure}{0.47\linewidth}
        \centering
        \includegraphics[width=\linewidth]{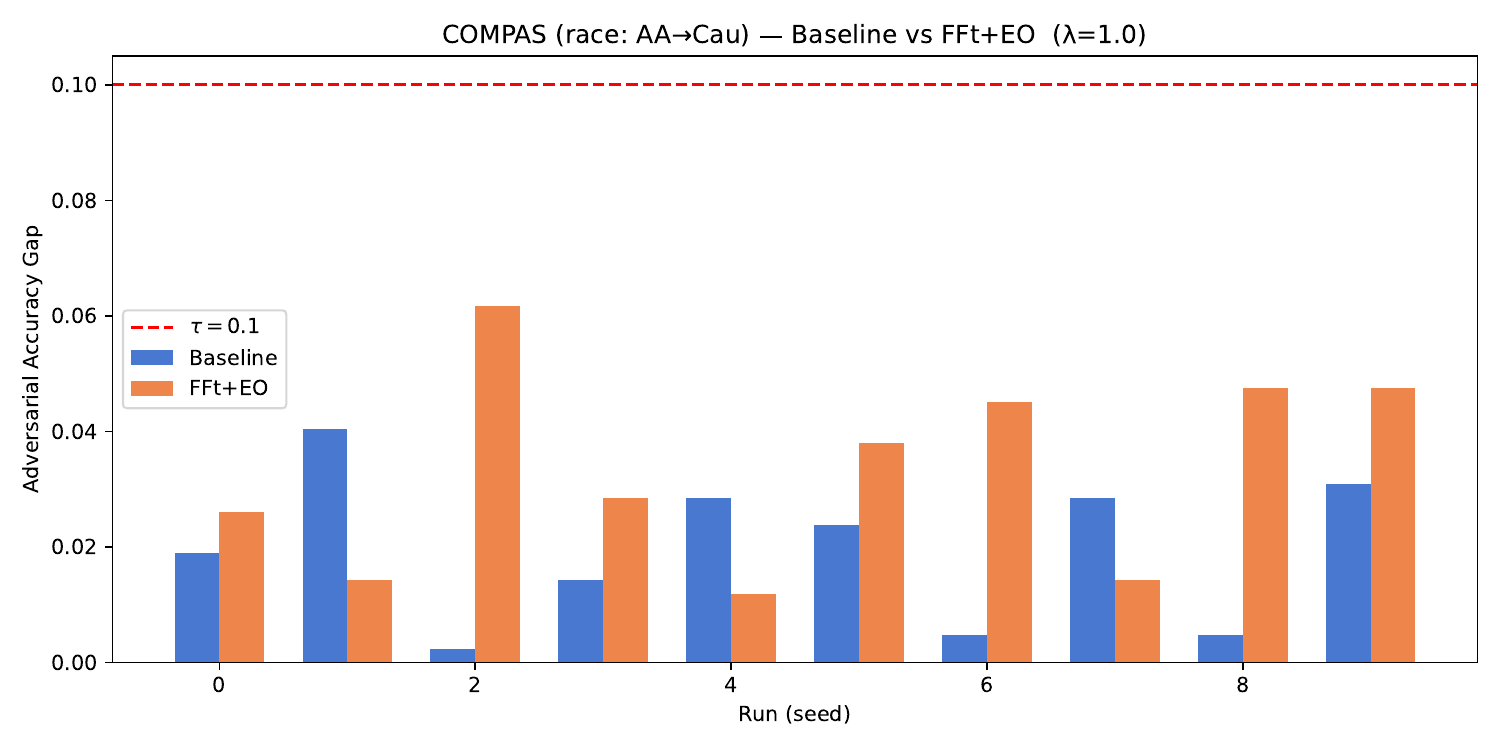}
        \caption{COMPAS (\texttt{race}: African-American $\to$ Caucasian)}
        \label{fig:compas-race}
    \end{subfigure}
    \hfill
    \begin{subfigure}{0.47\linewidth}
        \centering
        \includegraphics[width=\linewidth]{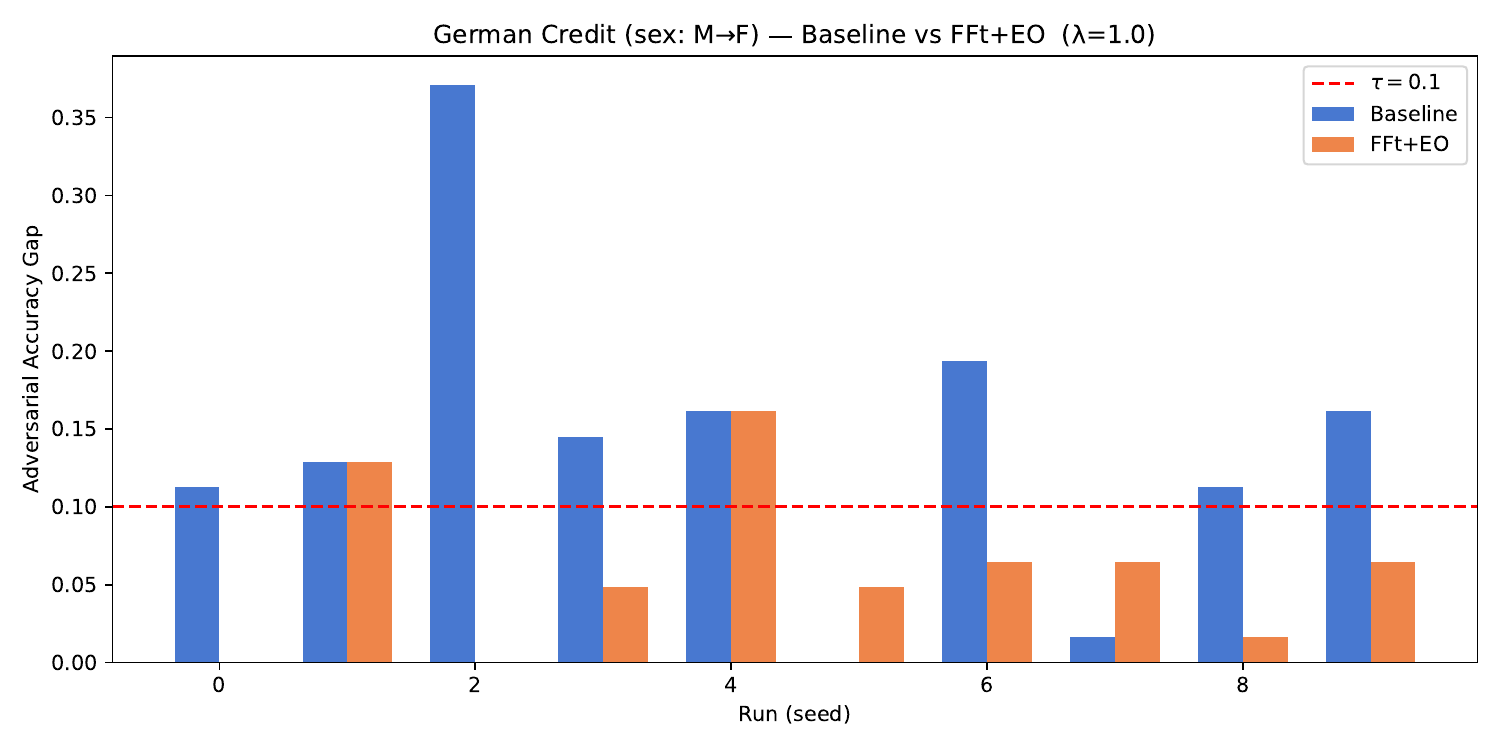}
        \caption{German Credit (\texttt{sex}: male $\to$ female)}
        \label{fig:german-sex}
    \end{subfigure}
    \caption{Adversarial accuracy gap for COMPAS and German Credit across 10 runs. COMPAS gaps are below $\tau$ in all runs; German Credit drops below $\tau$ in 8/10 runs.}
    \label{fig:compas}
\end{figure*}

\subsubsection{COMPAS (race)}

The Baseline gap is small ($\sim\!2.0\%$), which is already below the threshold $\tau$, because African-American and Caucasian defendants in this dataset have similar recidivism rates after standard filtering.  This is a case where the adversary cannot reliably distinguish the two groups even before any defense is applied.  Rehearsal-based FFt preserves this protection at $\sim\!3.4\%$ (below $\tau$ in all 10 runs); the small absolute increase is within run-to-run variance and does not reflect a degradation in security.  Crucially, the EO penalty substantially reduces $\Delta_{\text{EO}}$ from $\sim\!37.5\%$ to $\sim\!15.4\%$, tightening the theoretical bound of Theorem~\ref{thm:eqodds} by a factor of $2.4\times$ and providing a stronger formal guarantee even when the empirical gap was already negligible.

\subsubsection{German Credit (sex)}

The Baseline gap is $\sim\!14.0\%$: a model trained on male applicants classifies female applicants less accurately, reflecting genuine distributional differences in the German Credit data.  Rehearsal-based FFt reduces the gap to $\sim\!6.0\%$, below $\tau$ in 8 of 10 runs.  The 2.2:1 size asymmetry (690 male vs.\ 310 female) and the small dataset ($n\!=\!1000$) limit full convergence, consistent with the practical guideline in Section~\ref{sec:discussion}.

\subsubsection{UTKFaces (race)}

The Baseline gap is $\sim\!7.1\%$: a ResNet-18 trained exclusively on White faces generalises poorly to non-White faces whose age-distribution statistics differ.  FFt reduces the gap to $\sim\!5.5\%$, with both values below $\tau$ in all 10 runs.  The primary contribution of this setting is breadth: it demonstrates that the Adv--EO bound (Theorem~\ref{thm:eqodds}) holds for a deep convolutional model fine-tuned end-to-end, not just tabular MLPs, and that rehearsal ($\rho\!=\!0.2$) prevents the ResNet's larger parameter space from catastrophically forgetting the White-face distribution.  The $\Delta_{\text{EO}}$ reduction from $17.7\%$ to $15.5\%$ is more modest than in tabular settings, consistent with the tighter baseline gap leaving less room for EO improvement.


\subsubsection{Bias in Bios (sex)}

The Baseline gap is $\sim\!5.2\%$: an MLP trained on male biographies (TF-IDF features, gender-scrubbed text) predicts occupation less accurately on female biographies, reflecting the well-documented gender skew in Bias in Bios~\citep{deArteaga2019BiasIB}. Rehearsal-based FFt reduces the gap to $\sim\!0.9\%$, below $\tau$ in all 10 runs. This extends FFt to NLP classification and shows that the defence is not limited to tabular tasks.

\begin{figure*}[!t]
    \centering
    \begin{subfigure}[b]{0.30\linewidth}
        \centering
        \includegraphics[width=\linewidth]{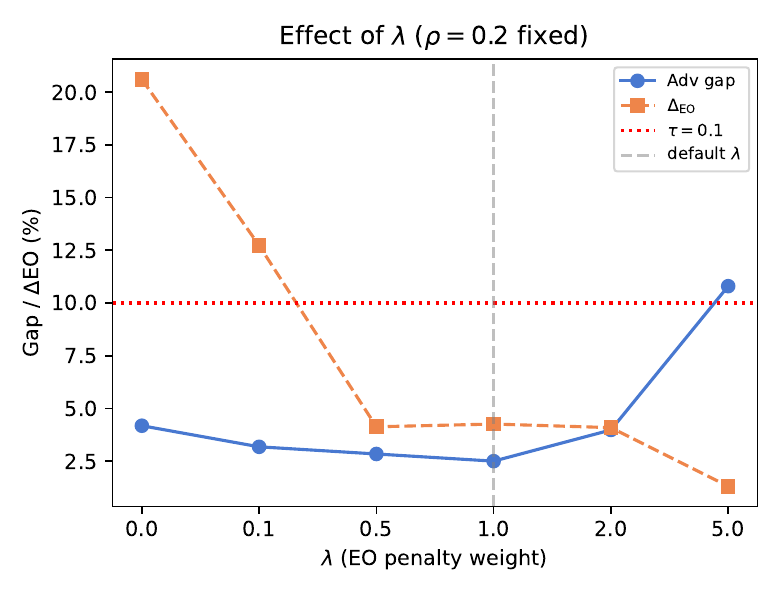}
        \caption{Effect of $\lambda$ ($\rho=0.2$ fixed). $\lambda=5.0$ overcorrects.}
        \label{fig:ablation-lambda}
    \end{subfigure}
    \hfill
    \begin{subfigure}[b]{0.30\linewidth}
        \centering
        \includegraphics[width=\linewidth]{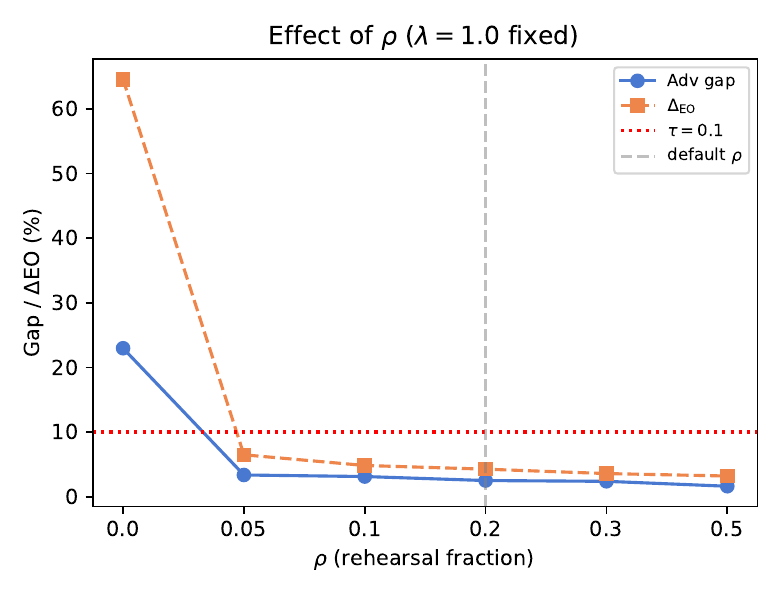}
        \caption{Effect of $\rho$ ($\lambda=1.0$ fixed). Any $\rho\ge0.05$ prevents forgetting.}
        \label{fig:ablation-rho}
    \end{subfigure}
    \hfill
    \begin{subfigure}[b]{0.30\linewidth}
        \centering
        \includegraphics[width=\linewidth]{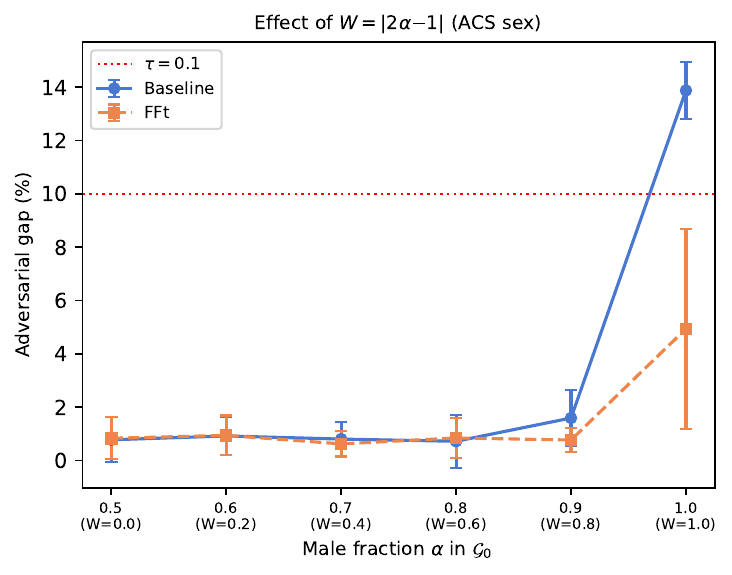}
        \caption{Effect of $W$ (mixed distribution, $\lambda=1.0$, $\rho=0.2$). Sharp jump at $\alpha=1.0$.}
        \label{fig:ablation-mixed}
    \end{subfigure}
    \caption{Ablation on ACS sex (10 seeds each). Blue = adversarial gap; orange = $\Delta_{\text{EO}}$ (4a, 4b); dotted red = $\tau=0.1$; dashed gray = default.}
    \label{fig:ablation}
\end{figure*}

\subsection{Discussion.}
\label{sec:discussion}

Our results are consistent with Theorem~\ref{thm:condition} and show a clear pattern across all evaluated settings.

\paragraph{Why rehearsal helps.}  Plain warm-start FFt can cause catastrophic forgetting — the fine-tuning step adapts the model so strongly to $\mathcal{G}_1$ that $\mathcal{G}_0$ accuracy collapses, increasing $\Delta_{\text{EO}}^{\text{fft}}$ rather than reducing it. Adding a rehearsal component (20\% of $\mathcal{G}_0$ data mixed into fine-tuning) prevents this collapse while still closing the gap: ACS achieves $2.5\%$ (sex) and $3.7\%$ (race), COMPAS $3.4\%$ (race), and German Credit $6.0\%$ (sex) — all below $\tau$ in the majority of runs.

\paragraph{Practical guideline.}  Use rehearsal-based FFt (mixing $\sim\!20\%$ of $\mathcal{G}_0$ into fine-tuning) to prevent catastrophic forgetting while closing the gap. Verify that $|\mathcal{S}_1|/|\mathcal{S}_0|$ is not too small; if the minority group is very small (e.g.\ $<20\%$ of $\mathcal{G}_0$), consider supplementing with synthetic data for $\mathcal{G}_1$. We also perform ablations on the sensitivity to $\lambda$, $\rho$, and the distributional shift $W$, as shown in Figure~\ref{fig:ablation}.

\paragraph{Adversary strength.}
Our experiments evaluate the Loss Test adversary~\citep{Suri2021FormalizingAE}, who observes only the model's test accuracy on each group.  This is the natural adversary for the black-box setting.  More powerful adversaries such as meta-classifiers trained on model weight snapshots~\citep{Suri2021FormalizingAE} or output-distribution statistics~\citep{dissecting23} can extract signals beyond the PPR gap. We do not evaluate against meta-classifiers and acknowledge this as a limitation.  Nevertheless, there is a structural reason to expect that EO enforcement degrades meta-classifier performance as well: meta-classifiers succeed by exploiting systematic differences in a model's group-conditional prediction patterns---precisely the signal that EO minimises.  A model with $\Delta_{\textup{EO}} \approx 0$ produces nearly identical conditional prediction rates for $\mathcal{G}_0$ and $\mathcal{G}_1$, removing the demographic signature that weight-space or logit-space meta-classifiers rely on.  Formally extending Theorem~\ref{thm:eqodds} to meta-classifier adversaries is an important open question we leave to future work.  The Loss Test is the standard black-box evaluation in the DIA literature and is directly comparable to prior work.


\subsection{Ablation: Hyperparameters $\lambda$, $\rho$, and Distributional Shift $W$}
\label{sec:ablation}

We ablate the two key hyperparameters of rehearsal-based FFt and the distributional shift weight $W$ on the ACS sex task ($\mathcal{G}_0$=male, $\mathcal{G}_1$=female), running 10 independent seeds per configuration.

\paragraph{Effect of $\lambda$ (Figure~\ref{fig:ablation-lambda}, $\rho=0.2$ fixed).}
$\lambda=0$ recovers plain rehearsal without an EO penalty; the gap is reduced (4.2\%) but $\Delta_{\text{EO}}$ stays large (20.6\%).  Increasing $\lambda$ monotonically reduces $\Delta_{\text{EO}}$: at our default $\lambda=1.0$ (marked~$\star$) the gap reaches 2.5\% with 10/10 runs below $\tau$.  However, $\lambda=5.0$ overcorrects, that is, the EO term dominates cross-entropy, widening the accuracy gap to 10.8\% and causing the bound to fail in 8 of 10 runs. Hence, we recommend choosing $\lambda\in[0.5,2.0]$.

\paragraph{Effect of $\rho$ (Figure~\ref{fig:ablation-rho}, $\lambda=1.0$ fixed).}
Without rehearsal ($\rho=0$) catastrophic forgetting causes the gap to jump to 23\% and $\Delta_{\text{EO}}$ to 64.5\%.  Any non-zero $\rho$ immediately prevents this: $\rho=0.05$ restores the gap to 3.4\% with 10/10 runs below threshold $\tau$.  Beyond $\rho=0.20$ gains are marginal (2.5\%$\to$1.6\% at $\rho=0.50$), therefore we use $\rho=0.20$ as the default throughout.

\paragraph{Effect of distributional shift $W$ (Figure~\ref{fig:ablation-mixed}).}
We vary the male fraction $\alpha\in\{0.5,\ldots,1.0\}$ in $\mathcal{G}_0$ (with $\mathcal{G}_1=(1{-}\alpha)\text{ male}+\alpha\text{ female}$), so $W\!=\!|2\alpha{-}1|$ ranges from $0$ to $1$.  For all $\alpha\!\in\![0.5,0.9]$ ($W\!\le\!0.8$), both baseline and FFt gaps are $\le\!1.6\%$ and already below $\tau$ in all 10 runs — the adversary has little signal to exploit.  At $\alpha\!=\!1.0$ ($W\!=\!1$, the biased protocol), the gap jumps sharply to $13.9\%$ for the baseline and $4.9\%$ for FFt (8/10 below $\tau$).  This nonlinear jump reflects the joint product $\Delta_{\text{EO}}\!\cdot\!W$: at $\alpha\!=\!1.0$ the model trained on pure males has never seen females, driving $\Delta_{\text{EO}}^{\text{base}}$ to $\sim\!78\%$, while at $\alpha\!=\!0.9$ the training set already includes $10\%$ females, keeping $\Delta_{\text{EO}}^{\text{base}}$ small.  This figure empirically confirms Corollary~\ref{cor:biased}: the biased protocol is qualitatively the hardest case, and any FFt defense that succeeds at $W\!=\!1$ is guaranteed to succeed at all $W\!<\!1$. Full numerical results for the $\lambda$ and $\rho$ sweeps are displayed in Appendix.

\section{Conclusion \& Future Work}

We introduce Fair Finetuning (FFt) with Equalized Odds (EO) as a principled defense against distribution inference attacks and develop a complete theoretical characterisation. Theorem~\ref{thm:eqodds} establishes the tighter bound $\text{Adv}(\mathcal{A},M_f) \le \Delta_{\text{EO}} \cdot W$, where $W$ is the distributional shift weight (Remark~\ref{rem:W}). Under the biased distribution protocol, where $W=1$, the bound reduces to $\text{Adv} \le \Delta_{\text{EO}}$, making FFt's effectiveness directly controllable through the EO constraint. Theorem~\ref{thm:condition} characterises exactly when FFt is beneficial and predicts failure when catastrophic forgetting raises $\Delta_{\text{EO}}^{\text{fft}}$ above $\Delta_{\text{EO}}^{\text{base}}$ while Proposition~\ref{prop:tight} confirms the bound is tight. Corollary~\ref{cor:biased} establishes the biased protocol as the worst case for the defender and Proposition~\ref{prop:groupsize} identifies the group-size failure regime.

Empirically, we evaluate a biased distribution setup (one pure demographic group as $\mathcal{G}_0$, the complementary group as $\mathcal{G}_1$) across six datasets spanning three modalities---tabular (ACS Income, COMPAS, German Credit), image (UTKFaces), and NLP (Bias in Bios, LSAC), with LSAC evaluated separately in the Appendix. Rehearsal-based FFt reduces the adversarial gap to $2.5\%$ (ACS sex) and $3.7\%$ (ACS race)---below $\tau$ in all 10 runs. COMPAS's baseline gap of $2.0\%$ remains below $\tau$ at $3.4\%$ after FFt, with the EO penalty tightening $\Delta_{\text{EO}}$ from $37.5\%$ to $15.4\%$. German Credit drops from $14.0\%$ to $6.0\%$ (8/10 runs below $\tau$); UTKFaces from $7.1\%$ to $5.5\%$ (all runs); and Bias in Bios from $5.2\%$ to $0.9\%$ (all runs). On LSAC, FFt reduces the gap from $\sim\!74\%$ to $\sim\!8.8\%$ (sex, below $\tau$ in the majority of runs); for race (5:1 size asymmetry), FFt halves the gap ($49.8\% \to 25.2\%$) but cannot close it entirely---an instance of the failure condition in Theorem~\ref{thm:condition}. These results confirm that rehearsal-based FFt generalises across datasets, modalities, and sensitive attributes. A mixed-distribution ablation (Figure~\ref{fig:ablation-mixed}) additionally validates Corollary~\ref{cor:biased}: for $\alpha\in[0.5,0.9]$ (i.e.\ $W\le 0.8$) the adversarial gap is $\le\!1.6\%$ and already below $\tau$, with a sharp jump to $13.9\%$ only at $\alpha=1.0$ ($W=1$), confirming that the biased protocol is the qualitatively hardest case.

\subsection{Limitations}

\begin{enumerate}
    \item \textbf{\underline{Data Acquisition.}} FFt requires sampling from the complementary distribution $\mathcal{G}_{\neg b}$. While this distribution is publicly known under the standard threat model, obtaining a labeled sample from it still incurs practical cost. Future work should explore whether synthetic data generation or differentially private mechanisms can substitute for real complementary samples.

    \item \textbf{\underline{Attribute Awareness.}} The current formulation assumes the defender knows which sensitive attribute the adversary targets. In practice, an adversary could probe any undisclosed attribute (e.g.\ income, political affiliation). Extending FFt to simultaneously enforce EO across multiple sensitive attributes---or to use attribute-agnostic representation learning---is an important direction (a theoretical extension to multiple attributes is given in the Appendix).

    \item \textbf{\underline{Group Size Asymmetry.}} When $|\mathcal{G}_1| \ll |\mathcal{G}_0|$, fine-tuning on $\mathcal{G}_1$ cannot fully recalibrate the model (LSAC race). Complementary approaches such as synthetic oversampling of $\mathcal{G}_1$ or importance-weighted EO constraints should be explored.

    \item \textbf{\underline{Dataset Scope.}} While our evaluation covers tabular, image, and NLP modalities across six datasets, it is limited to binary classification tasks with binary group membership. Extending FFt to multi-class sensitive attributes, continuous protected attributes, and larger-scale vision-language models remains important future work.

    \item \textbf{\underline{Adversary Strength.}} Experiments evaluate the black-box Loss Test adversary.  Meta-classifier adversaries that operate on model weight distributions~\citep{Suri2021FormalizingAE} may extract signals beyond the PPR gap; we provide a structural argument for why EO enforcement should degrade their performance (Section~\ref{sec:discussion}), but a formal extension of Theorem~\ref{thm:eqodds} to meta-classifiers and empirical validation remain important open questions.

    \item \textbf{\underline{Comparison to Existing Defenses.}} We include a direct comparison to DP-SGD (results in Appendix).  Direct comparison to property unlearning~\citep{secrypt23} and Inf2Guard~\citep{leila24usenix} is left as future work; we anticipate FFt will be complementary, since it is orthogonal (fairness-based post-processing, no cryptographic overhead) and the two defenses target different leakage surfaces.
\end{enumerate}

\subsection{Future Directions}

Future work may explore how fairness constraints can be embedded directly in the training objective rather than applied as post-hoc fine-tuning, whether the Adv--EO bound of Theorem~\ref{thm:eqodds} can be tightened for specific model families, and how rehearsal-based FFt interacts with differential privacy guarantees.

\section*{Impact Statement}

\paragraph{Ethical Considerations.}
All datasets used in this work (ACS Income, COMPAS, German Credit, UTKFaces, Bias in Bios, and LSAC) are publicly available benchmarks; no new data was collected and no human subjects were recruited, so IRB review was not required.  That said, several datasets carry ethical weight that warrants acknowledgement.  COMPAS is a commercial recidivism risk-score instrument with well-documented racial disparities~\citep{angwin16compas} and is used in real criminal-sentencing contexts; treating it purely as a benchmark normalizes its deployment and may implicitly lend credibility to its continued use.  UTKFaces encodes race and gender through third-party annotation, embedding socially constructed categories as ground-truth labels.  Across all datasets we operationalize race and sex as binary attributes, which reflects the limitations of existing benchmarks rather than a normative claim about identity, but risks invisibilizing non-binary and multiracial individuals in the evaluation.  We flag this as a known gap that future work should address by sourcing datasets with richer demographic representation.

\paragraph{Researcher Positionality.}
This work sits at the intersection of ML security and algorithmic fairness, and the author's training in both areas shapes several design choices.  Framing the problem as a cryptographic game (adversarial advantage, bit-guessing) reflects a security-first lens that defines harm as information leakage and success as bounding an adversary's distinguishing advantage.  This framing is powerful for formal guarantees but may underweight harms that are harder to quantify, such as the degrading effects of surveillance or the symbolic harm of demographic profiling.  The choice of Equalized Odds as the operative fairness criterion reflects a group-statistical view of fairness; individual or causal notions of fairness would yield different bounds and potentially different empirical conclusions.  The author approaches the problem with the assumption that fairness and privacy are complementary objectives — a perspective that motivates the central thesis but may occlude settings where reducing distributional leakage conflicts with other fairness criteria (e.g.\ individual fairness or counterfactual fairness).

\paragraph{Adverse Impact.}
The primary intended use of FFt is defensive: reducing an adversary's ability to infer demographic composition from a deployed model.  However, several misuse and misapplication risks deserve explicit attention.  First, the defense could engender false confidence if applied without checking the preconditions established in Theorems~\ref{thm:eqodds} and~\ref{thm:condition}: a practitioner who applies FFt with severe group-size asymmetry ($\alpha \ll 0.2$) or an over-large EO penalty ($\lambda \gg 2$) may produce a model whose adversarial gap remains high or whose accuracy-EO coupling breaks, yet who reports the deployment as ``protected.''  Second, FFt could be exploited as a form of fairness-washing: an organization could apply a minimal EO constraint ($\lambda$ just large enough to reduce the measured adversarial gap below $\tau$) while the underlying model continues to encode discriminatory representations along unmeasured attributes.  The bound $\text{Adv} \le \Delta_{\text{EO}}$ applies specifically to the positive-prediction-rate gap under black-box access; it does not protect against white-box attacks, side-channel leakage via confidence scores or latency, or inference about sensitive attributes not included in the EO constraint.  Deployers should treat the bound as a guarantee about one specific attack surface, not a blanket privacy certificate.

\bibliography{files/refs}

\appendix
\section{Appendix}
\subsection{Extension to Multiple Sensitive Attributes}
\label{app:multiattr}

The single-attribute bound of Theorem 1 extends directly to $K$ simultaneous sensitive attributes by applying the proof independently per attribute.

\begin{theorem}[Multi-attribute Extension]
\label{thm:multi}
Suppose $M_f$ satisfies EO with disparity $\Delta_{\textup{EO}}^{(k)}$ for each of $K$ sensitive attributes $A_1,\ldots,A_K$, enforced simultaneously.  Then for any adversary targeting attribute $A_k$:
\[
\textup{Adv}(\mathcal{A}_{A_k}, M_f) \;\le\; \Delta_{\textup{EO}}^{(k)} \cdot W^{(k)},
\]
where $W^{(k)}$ is defined as in Theorem 1 with respect to $A_k$.  The bounds across attributes are independent: suppressing leakage about $A_k$ does not require sacrificing protection for $A_{k'}$.
\end{theorem}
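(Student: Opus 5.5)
The plan is to reduce Theorem~\ref{thm:multi} to $K$ separate applications of Theorem~\ref{thm:eqodds}. The key observation is that Theorem~\ref{thm:eqodds} uses only two properties of $M_f$ with respect to a single attribute $A$. The first is the per-label residual unfairness relation $P_{y,1}=P_{y,0}+\delta_y$ with $|\delta_y|\le\Delta_{\textup{EO}}$. The second is the compositional shift $\Delta P_y$ of $A$ between $D_0$ and $D_1$. Neither property refers to any other attribute. So I would fix an arbitrary $k\in\{1,\dots,K\}$ and consider the adversary $\mathcal{A}_{A_k}$ whose game uses transformations $\mathcal{G}_0,\mathcal{G}_1$ that differ in the composition of $A_k$. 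I would then rerun the proof of Theorem~\ref{thm:eqodds} verbatim with $A$ replaced by $A_k$.

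The steps, in order, are as follows.

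First, condition the PPR gap on $Y$ exactly as in Equation~\ref{eqn:condition-on-y}. This step does not involve any attribute, so it carries over unchanged.

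Second, expand $\Delta_y$ by conditioning on $A_k\in\{0,1\}$. Here $P^{(k)}_{y,a}=\Pr[\hat Y=1\mid Y=y,A_k=a]$ and $\Delta P^{(k)}_{y,a}$ is the corresponding shift. These conditional rates marginalise over all other attributes $A_{k'}$, which is what makes the argument per-attribute.

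Third, use the complement identity $\Delta P^{(k)}_{y,0}=-\Delta P^{(k)}_{y,1}$ together with the hypothesis that $M_f$ has EO disparity $\Delta^{(k)}_{\textup{EO}}$ with respect to $A_k$. This gives $|\delta^{(k)}_y|\le\Delta^{(k)}_{\textup{EO}}$, and the same cancellation yields $\Delta_y=\delta^{(k)}_y\Delta P^{(k)}_y$.

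Fourth, sum over $y$ to obtain $\textup{Adv}(\mathcal{A}_{A_k},M_f)\le\Delta^{(k)}_{\textup{EO}}W^{(k)}$.

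For the independence claim, I would observe that the right-hand side for attribute $k$ depends only on two quantities. One is $\Delta^{(k)}_{\textup{EO}}$, a property of $M_f$ measured with respect to $A_k$ alone. The other is $W^{(k)}$, a property of the data measured with respect to $A_k$ alone. Since the hypothesis is that all $K$ disparities hold simultaneously for the same $M_f$, all $K$ bounds hold at once. No bound for $A_k$ uses or degrades the hypothesis for $A_{k'}$. I would phrase the ``no sacrifice'' statement strictly in this sense: it is a consequence of the bounds, given that the EO constraints are jointly satisfied. It does not assert that such joint satisfaction is always achievable at no accuracy cost.

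The main obstacle is the second step. When the $A_k$ are correlated, shifting the composition of $A_k$ may also shift the distribution of other attributes within each cell $(Y=y,A_k=a)$. In that case $P^{(k)}_{y,a}$ could take different values under $D_0$ and $D_1$, whereas the single-attribute proof implicitly treats $P_{y,a}$ as distribution-invariant. I would handle this by stating explicitly the same invariance assumption used in Theorem~\ref{thm:eqodds}, namely $\Pr[\hat Y=1\mid Y,A_k,D_0]=\Pr[\hat Y=1\mid Y,A_k,D_1]$. Equivalently, $\mathcal{G}_0$ and $\mathcal{G}_1$ for the $A_k$-game act only by reweighting $A_k$ given $Y$. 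Under that assumption the per-attribute reduction goes through without change.
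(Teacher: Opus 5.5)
Your proposal follows the paper's argument: the paper's proof sketch also reruns the single-attribute derivation with $A$ replaced by $A_k$, noting that the cancellation $\Delta_y=\delta^{(k)}_y\,\Delta P^{(k)}_y$ involves only the $A_k$-conditional rates $P^{(k)}_{y,a}$ and that simultaneous EO enforcement gives $|\delta^{(k)}_y|\le\Delta^{(k)}_{\textup{EO}}$ for every $k$. The invariance you make explicit is used tacitly by the paper too, both in Equation~\ref{eqn:Deltay} and in the $Y$-conditioning step, which also needs $\Pr[Y=y]$ to agree under $D_0$ and $D_1$; the two forms you call equivalent are not, since rate-invariance leaves the label prior free, so state the ``reweight $A_k$ given $Y$ only'' form, which covers both uses.
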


\begin{proof}[Proof sketch]
Theorem 1 is proved for a single attribute by conditioning on $A$.  The proof is attribute-specific: the cancellation $\Delta_y = \delta_y \Delta P_y$ depends only on $P_{y,a}^{(k)}$ (the model's prediction rates conditioned on $A_k$).  Applying the same argument independently for each $k$ yields the stated bounds.  Simultaneous EO enforcement on $K$ attributes sets $|\delta_y^{(k)}| \le \Delta_{\textup{EO}}^{(k)}$ for every $k$, completing the proof.
\end{proof}

\section{Proof of the Identity $\Delta P_{y,0} = -\Delta P_{y,1}$}
\label{app:identity-proof}

For any fixed value of the true label $Y=y$ and distribution $D$, the sensitive attribute
$A \in \{0,1\}$ satisfies
\[
\Pr[A=0 \mid Y=y, D] + \Pr[A=1 \mid Y=y, D] = 1.
\]
Define the distributional differences
\[
\Delta P_{y,a} 
:= 
\Pr[A=a \mid Y=y, D_1] 
- 
\Pr[A=a \mid Y=y, D_0].
\]

Applying the above identity to both $D_1$ and $D_0$ gives
\[
\Pr[A=0 \mid Y=y, D_1] + \Pr[A=1 \mid Y=y, D_1] = 1,
\]
\[
\Pr[A=0 \mid Y=y, D_0] + \Pr[A=1 \mid Y=y, D_0] = 1.
\]
Subtracting the first equation from the second yields
\begin{multline}
\big(\Pr[A=0 \mid Y=y, D_1] - \Pr[A=0 \mid Y=y, D_0]\big) + \\
\big(\Pr[A=1 \mid Y=y, D_1] - \Pr[A=1 \mid Y=y, D_0]\big) = 0, \nonumber
\end{multline}

which is equivalent to
\[
\Delta P_{y,0} + \Delta P_{y,1} = 0.
\]
Thus,
\[
\Delta P_{y,0} = - \Delta P_{y,1}.
\]

In other words, because $A$ is binary and its conditional probabilities must sum to one under
each distribution, i.e., any increase in the probability of $A=1$ among individuals
with label $Y=y$ under $D_1$ must produce an equal decrease in the probability of
$A=0$.
Hence the two changes are exact opposites.
It is therefore convenient to define

\[
\Delta P_y := \Delta P_{y,1},
\qquad\text{so that}\qquad
\Delta P_{y,0} = -\Delta P_y.
\]

\section{Scope of the PPR Bound and the Base-Rate Floor}
\label{app:ppr-scope}

Theorem 1 bounds the \emph{positive-prediction-rate} (PPR) gap
$|\Pr[\hat{Y}=1 \mid D_1] - \Pr[\hat{Y}=1 \mid D_0]|$.
The Loss Test adversary, however, observes the \emph{accuracy} gap
$|acc(M, S_0^{\text{test}}) - acc(M, S_1^{\text{test}})|$.
This section derives the exact relationship between the two and explains the base-rate floor that appears when $\Delta_{\text{EO}}=0$.

\paragraph{Accuracy in terms of TPR and FPR.}
For a binary classifier on distribution $D$ with positive base rate $\pi = \Pr[Y=1 \mid D]$:
\[
acc(M, D) = \pi \cdot \text{TPR}(D) + (1-\pi) \cdot (1 - \text{FPR}(D)),
\]
where $\text{TPR}(D) = \Pr[\hat{Y}=1 \mid Y=1, D]$ and $\text{FPR}(D) = \Pr[\hat{Y}=1 \mid Y=0, D]$.

\paragraph{Accuracy gap when EO is perfectly enforced.}
Suppose $\Delta_{\text{EO}} = 0$, so that TPR and FPR are identical across groups:
$\text{TPR}(D_0) = \text{TPR}(D_1) = \text{TPR}$ and $\text{FPR}(D_0) = \text{FPR}(D_1) = \text{FPR}$. Let $\Delta_{\text{acc}} = \text{acc}(M,D_1) - \text{acc}(M,D_0)$. Then:
\begin{align}
    \Delta_{\text{acc}}
    &= \pi_1\cdot\text{TPR} + (1-\pi_1)(1-\text{FPR}) \notag\\
    &\phantom{=} - \pi_0\cdot\text{TPR} - (1-\pi_0)(1-\text{FPR}) \notag\\
    &= (\pi_1-\pi_0)\cdot\bigl(\text{TPR} + \text{FPR} - 1\bigr) \notag\\
    &= \Delta\pi\cdot\bigl(\text{TPR} + \text{FPR} - 1\bigr).
\end{align}
where $\Delta\pi = \pi_1 - \pi_0 = \Pr[Y=1 \mid D_1] - \Pr[Y=1 \mid D_0]$ is the label base-rate difference between groups.

This quantity is the \emph{base-rate floor}: the accuracy gap that persists even after perfect EO enforcement, arising purely from the two groups having different label prevalences.  It is zero only when $\Delta\pi=0$ (equal base rates) or when $\text{TPR}+\text{FPR}=1$ (the model is no better than random).

\paragraph{Interpretation.}
Theorem 1 bounds the PPR-driven component of adversarial distinguishability; the base-rate floor is a separate, EO-independent contribution.  When groups have similar label distributions ($|\Delta\pi|$ small), the floor is negligible and the bound is informative for the accuracy gap as well.  When base rates diverge sharply, the floor dominates even at $\Delta_{\text{EO}}=0$, which is precisely the mechanism behind the $\lambda=5.0$ failure (Figure 4a) and the LSAC \texttt{race} partial failure (shown below).

\paragraph{Meta-classifier adversaries.}
Adversaries more powerful than the Loss Test (e.g.\ meta-classifiers trained on model weight snapshots~\citep{Suri2021FormalizingAE}) may extract signals beyond both the PPR gap and the accuracy gap.  Bounding their advantage requires tighter coupling with the model's full output distribution, which we leave as future work.

\section{Distribution Inference Attack Game}
\label{app:orig_threat}

The original distribution inference attack game as introduced by~\cite{Suri2021FormalizingAE} is displayed in Figure~\ref{fig:orig_threat}.

\section{LSAC Results (Near-Saturated Regime)}
\label{app:lsac}

The LSAC bar-passage dataset~\citep{Wightman1998LSACNB} contains records for 20,800 law-school students. Features include LSAT score, undergraduate GPA, family income, age, gender, race, law-school cluster (1--6), and full/part-time status. The binary target is bar passage. We experiment on two sensitive attributes: \texttt{race} (\texttt{white} vs.\ \texttt{non-white}) and \texttt{sex} (\texttt{male} vs.\ \texttt{female}).

LSAC represents the \emph{near-saturated} regime: the MLP trained on $\mathcal{G}_0$ achieves $\sim\!99.9\%$ accuracy, causing $\Delta_{\text{EO}} \approx 1$. Plain warm-start FFt (without rehearsal) is used here; rehearsal would preserve the near-perfect $\mathcal{G}_0$ accuracy and keep the gap large.

\begin{table}[!ht]
\centering
\caption{LSAC adversarial gap and EO disparity (mean over 10 runs, $W=1$). The bound $\text{Adv} \le \Delta_{\text{EO}}$ holds in both rows.}
\label{tab:lsac}
\resizebox{\columnwidth}{!}{%
\begin{tabular}{lcccccc}
\toprule
\multirow{2}{*}{\textbf{Setting (G0$\to$G1)}} &
  \multicolumn{2}{c}{\textbf{Adv gap}} &
  \multicolumn{2}{c}{\textbf{$\Delta_{\text{EO}}$}} &
  \multirow{2}{*}{\textbf{${\le}\,\tau$?}} &
  \multirow{2}{*}{\textbf{Bound holds?}} \\
\cmidrule(lr){2-3}\cmidrule(lr){4-5}
 & Base & FFt & Base & FFt & & \\
\midrule
LSAC \texttt{sex}\ (M\,$\to$\,F)   & $74.3\%$ & $\mathbf{8.8\%}$  & $99.9\%$ & $11.8\%$ & 6/10   & \cmark \\
LSAC \texttt{race}\ (W\,$\to$\,NW) & $49.8\%$ & $\mathbf{25.2\%}$ & $99.9\%$ & $68.7\%$ & \xmark & \cmark \\
\bottomrule
\end{tabular}}
\end{table}

\paragraph{LSAC (sex).}
The Baseline gap is very large ($\sim\!74\%$): a model trained exclusively on male students generalises poorly to female students whose bar-passage rates differ significantly. Plain warm-start FFt reduces the gap to $\sim\!8.8\%$, below $\tau$ in 6 of 10 runs.

\paragraph{LSAC (race).}
The Baseline gap is $\sim\!50\%$: a model trained on white students performs near-randomly on non-white students. FFt reduces the gap to $\sim\!25\%$ — a substantial improvement, but still above $\tau$. The 5.3:1 size asymmetry (17,493 white vs.\ 3,307 non-white) combined with a 33-percentage-point label-rate difference (whites $\sim\!85\%$ vs.\ non-whites $\sim\!52\%$ bar passage) prevents full recalibration.

\begin{figure*}[!th]
    \centering
    \begin{subfigure}{0.47\linewidth}
        \centering
        \includegraphics[width=\linewidth]{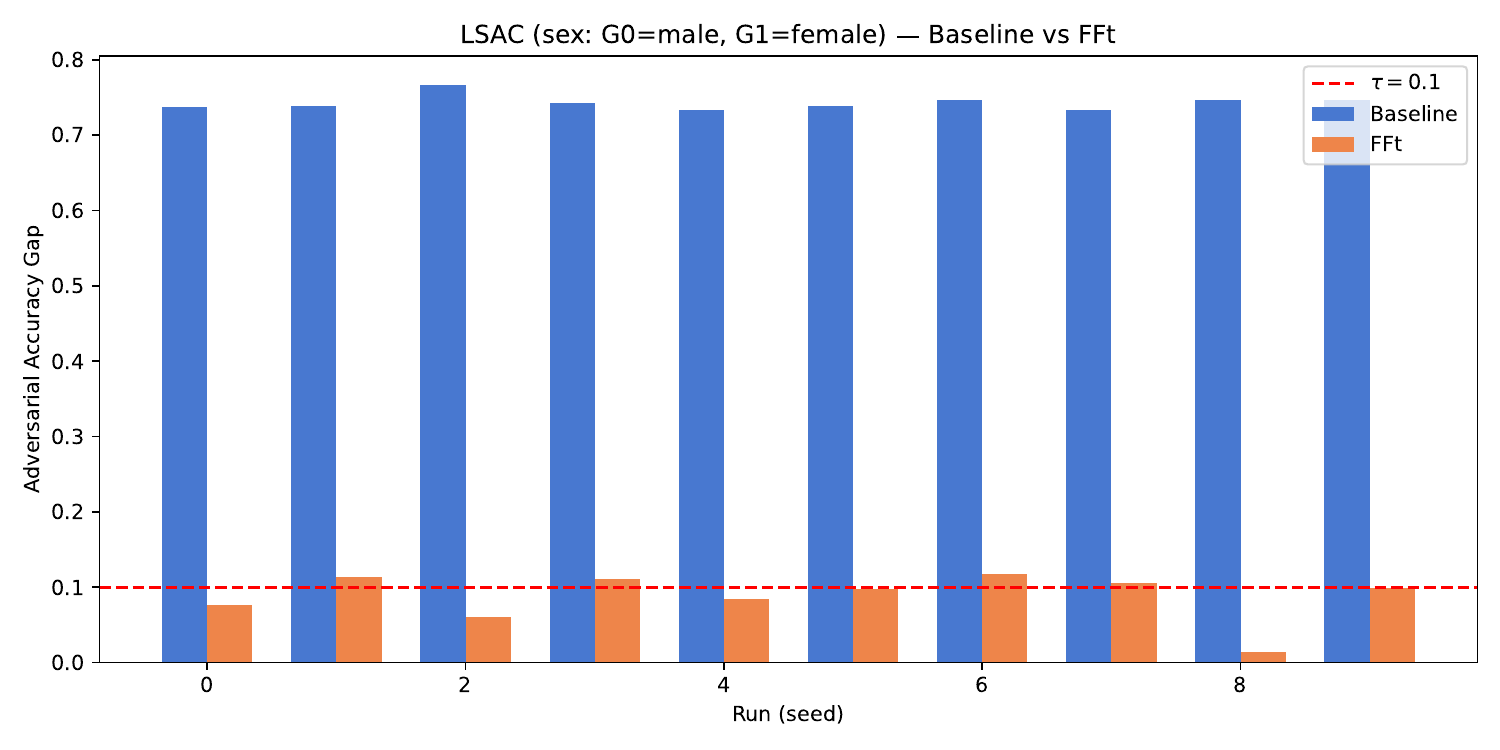}
        \caption{LSAC (\texttt{sex}: male $\to$ female)}
        \label{fig:lsac-sex}
    \end{subfigure}
    \hfill
    \begin{subfigure}{0.47\linewidth}
        \centering
        \includegraphics[width=\linewidth]{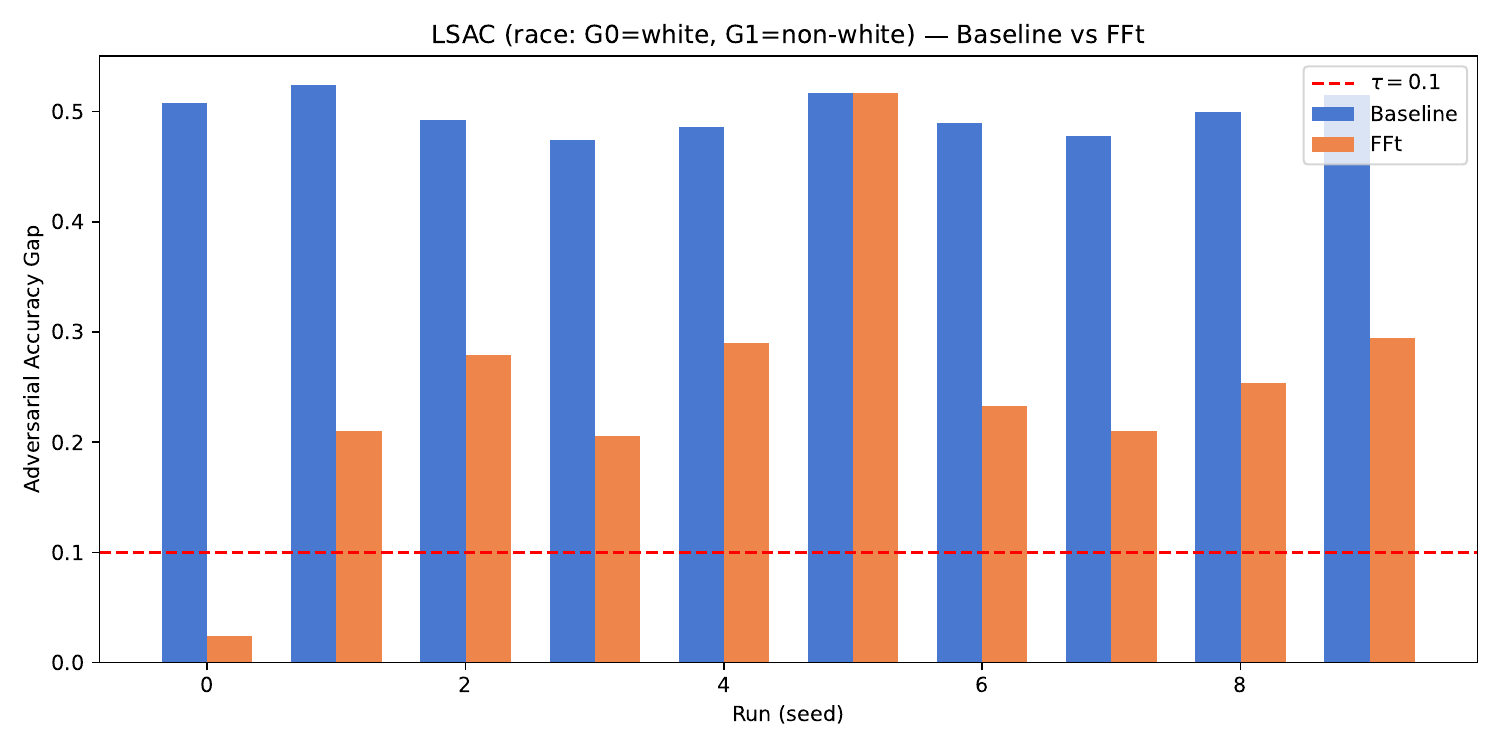}
        \caption{LSAC (\texttt{race}: white $\to$ non-white)}
        \label{fig:lsac-race}
    \end{subfigure}
    \caption{Adversarial accuracy gap for LSAC (mean over 10 runs, blue=Baseline, orange=FFt). FFt reduces the sex gap below $\tau=0.1$ in the majority of runs. For race, the 5:1 group-size asymmetry limits FFt's effectiveness.}
    \label{fig:lsac}
\end{figure*}

\section{Why $\Delta_{\text{EO}} \approx 1$ in the Near-Saturated Regime}
\label{app:deo-lsac}

The $\Delta_{\text{EO}} \approx 99.9\%$ entries in the LSAC rows of Table~\ref{tab:lsac} may appear surprising, but they follow directly from the definition.  A model that achieves near-perfect accuracy ($\sim\!99.9\%$) on $\mathcal{S}_0$ has calibrated its conditional prediction rates almost exactly to the $\mathcal{G}_0$ distribution: $P(\hat{Y}=1 \mid Y=1,\mathcal{G}_0) \approx 1$ (near-perfect true-positive rate) and $P(\hat{Y}=1 \mid Y=0,\mathcal{G}_0) \approx 0$ (near-perfect true-negative rate).  When this model is evaluated on $\mathcal{S}_1$ (a group it \emph{never saw during training}), it imposes its $\mathcal{G}_0$-fitted decision boundary on $\mathcal{G}_1$'s feature distribution.  Because $\mathcal{G}_1$ has systematically different feature patterns (different LSAT/GPA profiles for non-white students; different score ranges for female students), the $\mathcal{G}_0$-trained boundary classifies most $\mathcal{G}_1$ samples incorrectly, yielding a near-zero TPR on $\mathcal{G}_1$.  The EO disparity is therefore

\begin{align}
    \Delta_{\text{EO}} &= \max_y\bigl|
        P(\hat{Y}=1 \mid Y=y,\mathcal{G}_0) \notag\\
    &\phantom{{}=\max_y\bigl|}
        - P(\hat{Y}=1 \mid Y=y,\mathcal{G}_1)\bigr| \notag\\
    &\approx |1 - 0| \approx 1.
\end{align}

This is not a pathological result. In fact, it is the \emph{expected consequence of exclusive single-group training}: a model that is maximally accurate for $\mathcal{G}_0$ is simultaneously maximally unfair toward $\mathcal{G}_1$ by the EO metric.  The adversarial gaps ($74.3\%$ for sex; $49.8\%$ for race) are correctly bounded \emph{below} $\Delta_{\text{EO}} = 99.9\%$, consistent with Theorem 1; the bound is simply loose here because the adversary's observable signal (accuracy difference) is a coarser measure than population-level EO disparity.  After FFt, $\Delta_{\text{EO}}$ falls dramatically to $11.8\%$ (sex) and $68.7\%$ (race).  Theorem 1 then guarantees that the new adversarial advantage is bounded by these lower values and this is empirically confirmed by the post-FFt gaps ($8.8\%$ and $25.2\%$).  For LSAC sex the post-FFt bound is nearly tight ($8.8\% \le 11.8\%$), illustrating that the residual leakage is almost entirely explained by the residual unfairness $\delta_y$, which is the sole surviving term in the proof's algebraic cancellation.

\begin{figure*}[t]
    \centering
    \fbox{%
    \pseudocode[]{%
        \textbf{Trainer } \mathcal{T} \<\< \textbf{Adversary } \adv \\[][]
         \\
        \pcln b \sample \bin \\
        \pcln \mathcal{S} \sim \mathcal{G}_b(\mathcal{D}) \\
        \pcln M \xleftarrow{train}{} S \\
        \pcln \< \sendmessageright{top={$M$}, length=2cm} \\
        \pcln \< \< \hat{b} = \mathcal{H}(M)
    }
}
\caption{The traditional distribution inference attack setting, as described in~\cite{Suri2021FormalizingAE}. Step 2 involves sampling a dataset $\mathcal{S}$ from the function $\mathcal{G}_b$ (where $b$ is the selected bit from Step 1). Then, Step 3 involves training the model on this sampled dataset $\mathcal{S}$. Finally, this trained model is released to the public and the adversary applies a hypothesis $\mathcal{H}(\cdot)$ to recover the training distribution via computing $\hat{b}$.}
\label{fig:orig_threat}
\end{figure*}

\section{Utility Impact}

Table~\ref{tab:utility} reports $\mathcal{G}_0$ test accuracy before and after FFt.
The utility cost tracks how aggressively the EO constraint must be enforced: ACS sex incurs a ${\sim}4\%$ drop ($79.6\%{\to}75.7\%$), while ACS race incurs a larger ${\sim}9\%$ drop ($79.7\%{\to}71.1\%$), reflecting the stronger weight updates needed to close a bigger EO gap ($78.6\%{\to}5.5\%$ vs.\ $78.0\%{\to}4.3\%$ for sex).
COMPAS loses ${\sim}4\%$ ($68.2\%{\to}63.8\%$).
German Credit is the outlier: accuracy barely changes ($70.3\%{\to}69.7\%$, $\le\!1\%$ delta).
On this 1{,}000-sample dataset the model converges in fewer iterations and rehearsal is especially effective at preserving $\mathcal{G}_0$ accuracy, since even a small rehearsal buffer ($\rho{=}0.2$) covers a proportionally large fraction of the training distribution.
Across all settings the utility cost is within the range routinely accepted for fairness interventions~\citep{hardt16}, and it is directly offset by the EO reductions shown in Table 1 that close the adversarial gap.

\begin{table}[!ht]
\centering
\caption{$\mathcal{G}_0$ test accuracy before and after FFt (mean, 10 runs), for the four tabular/structured settings where architectures are comparable.  UTKFaces (ResNet-18) and Bias in Bios (TF-IDF+MLP) use different model families; their per-epoch accuracy curves are not directly comparable to the MLP training curve and are omitted here.}
\label{tab:utility}
\resizebox{\columnwidth}{!}{%
\begin{tabular}{lcc}
\toprule
\textbf{Setting (G0\,$\to$\,G1)} & \textbf{Baseline} & \textbf{FFt} \\
\midrule
ACS \texttt{sex}\ (M\,$\to$\,F)             & $79.6\%$ & $75.7\%$ \\
ACS \texttt{race}\ (W\,$\to$\,NW)           & $79.7\%$ & $71.1\%$ \\
COMPAS \texttt{race}\ (AA\,$\to$\,Cau)      & $68.2\%$ & $63.8\%$ \\
German \texttt{sex}\ (M\,$\to$\,F)          & $70.3\%$ & $69.7\%$ \\
\bottomrule
\end{tabular}}
\end{table}

\section{Ablation Hyperparameter Tables}
\label{app:ablation-tables}

The hyperparameters for the ablation study is shown in Tables~\ref{tab:ablation-lambda} and~\ref{tab:ablation-rho}. 

\begin{table}[!ht]
\centering
\caption{$\lambda$ ablation ($\rho=0.2$ fixed, ACS sex, 10 seeds). $\star$~= default used in all main experiments.}
\label{tab:ablation-lambda}
\resizebox{\columnwidth}{!}{%
\begin{tabular}{lcccc}
\toprule
$\lambda$ & Adv gap & $\Delta_{\text{EO}}$ & ${\le}\,\tau$? & Bound? \\
\midrule
$0.0$       & $4.2\%$  & $20.6\%$ & 10/10 & \cmark \\
$0.1$       & $3.2\%$  & $12.7\%$ & 10/10 & \cmark \\
$0.5$       & $2.8\%$  & $4.1\%$  & 10/10 & \cmark \\
$1.0^\star$ & $2.5\%$  & $4.3\%$  & 10/10 & \cmark \\
$2.0$       & $4.0\%$  & $4.1\%$  & 10/10 & \cmark \\
$5.0$       & $10.8\%$ & $1.3\%$  &  2/10 & \xmark \\
\bottomrule
\end{tabular}}
\end{table}

\begin{table}[!ht]
\centering
\caption{$\rho$ ablation ($\lambda=1.0$ fixed, ACS sex, 10 seeds). $\star$~= default used in all main experiments.}
\label{tab:ablation-rho}
\resizebox{\columnwidth}{!}{%
\begin{tabular}{lcccc}
\toprule
$\rho$ & Adv gap & $\Delta_{\text{EO}}$ & ${\le}\,\tau$? & Bound? \\
\midrule
$0.00$       & $23.0\%$ & $64.5\%$ &  0/10 & \cmark \\
$0.05$       & $3.4\%$  & $6.5\%$  & 10/10 & \cmark \\
$0.10$       & $3.1\%$  & $4.8\%$  & 10/10 & \cmark \\
$0.20^\star$ & $2.5\%$  & $4.3\%$  & 10/10 & \cmark \\
$0.30$       & $2.4\%$  & $3.6\%$  & 10/10 & \cmark \\
$0.50$       & $1.6\%$  & $3.2\%$  & 10/10 & \cmark \\
\bottomrule
\end{tabular}}
\end{table}

\section{DP-SGD vs.\ FFt: Full Results}
\label{app:dpsgd}

Table~\ref{tab:dpsgd} reports the full numerical comparison between DP-SGD and FFt on ACS Income 2018 (California), \texttt{sex} split ($\mathcal{G}_0$=male, $\mathcal{G}_1$=female), averaged over 10 independent seeds.  DP-SGD is trained from scratch on $\mathcal{G}_0$ using Opacus~\citep{opacus21} with $\delta{=}10^{-5}$, $C{=}1.0$, and batch size 256.  FFt warm-starts from the Baseline and fine-tunes on $\mathcal{S}_1$ with EO penalty $\lambda{=}1$ and rehearsal fraction $\rho{=}0.2$, identical to the main experiments.

\begin{table}[h]
\centering
\caption{DP-SGD vs.\ FFt on ACS sex (mean $\pm$ 10 seeds). $\tau=0.1$.}
\label{tab:dpsgd}
\small
\begin{tabular}{lrrrc}
\toprule
Method & Adv gap $\downarrow$ & $\Delta_{\text{EO}}$ $\downarrow$ & $\mathcal{S}_0$ acc & ${\le}\tau$? \\
\midrule
Baseline                       & 14.3\% & 78.0\% & 79.7\% & 0/10 \\
DP-SGD $\varepsilon{=}1$       & 13.2\% & 76.5\% & 78.6\% & 0/10 \\
DP-SGD $\varepsilon{=}2$       & 13.6\% & 77.9\% & 78.9\% & 0/10 \\
DP-SGD $\varepsilon{=}5$       & 13.8\% & 78.4\% & 79.1\% & 0/10 \\
\midrule
FFt (ours)                     & \textbf{2.5\%} & \textbf{4.3\%} & 75.7\% & \textbf{10/10} \\
\bottomrule
\end{tabular}
\end{table}

DP-SGD fails to reduce the adversarial gap at any privacy budget: the gap remains $13$--$14\%$ regardless of $\varepsilon$, and all $0/10$ runs fall below $\tau$.  This is structurally expected: DP-SGD clips and noises per-sample gradients to protect individual records, but does not alter the model's group-level behaviour.  A $\mathcal{G}_0$-trained model will still generalise poorly to $\mathcal{G}_1$, leaving the demographic accuracy gap. Hence the Loss Test adversary's signal here is intact. Tighter $\varepsilon$ (stronger privacy) trades $1.1$ percentage points of $\mathcal{G}_0$ accuracy but yields no DIA protection.

FFt closes the gap to $2.5\%$ (10/10 below $\tau$) because it directly minimises $\Delta_{\text{EO}}$, which by Theorem 1 ($W=1$) bounds the adversary's advantage.  The utility cost is $4.0$ pp on $\mathcal{G}_0$ accuracy.  The bound $\text{Adv} \le \Delta_{\text{EO}}$ holds ($2.5\% \le 4.3\%$), confirming the theoretical guarantee.

Since DP and FFt suppress different aspects of leakage (individual-level vs.\ group-level), they are orthogonal and can in principle be composed.  We leave a systematic composition study to future work.

\section{Per-Run Adversarial Gap Plots}
\label{app:barplots}

Figures~\ref{fig:utk}--\ref{fig:bios} show the per-seed adversarial accuracy gap for UTKFaces and Bias in Bios.  Both follow the same pattern: rehearsal-based FFt consistently reduces the gap, and the bound $\text{Adv} \le \Delta_{\text{EO}}$ holds in every run.

\begin{figure*}[!th]
    \centering
    \includegraphics[width=0.6\linewidth]{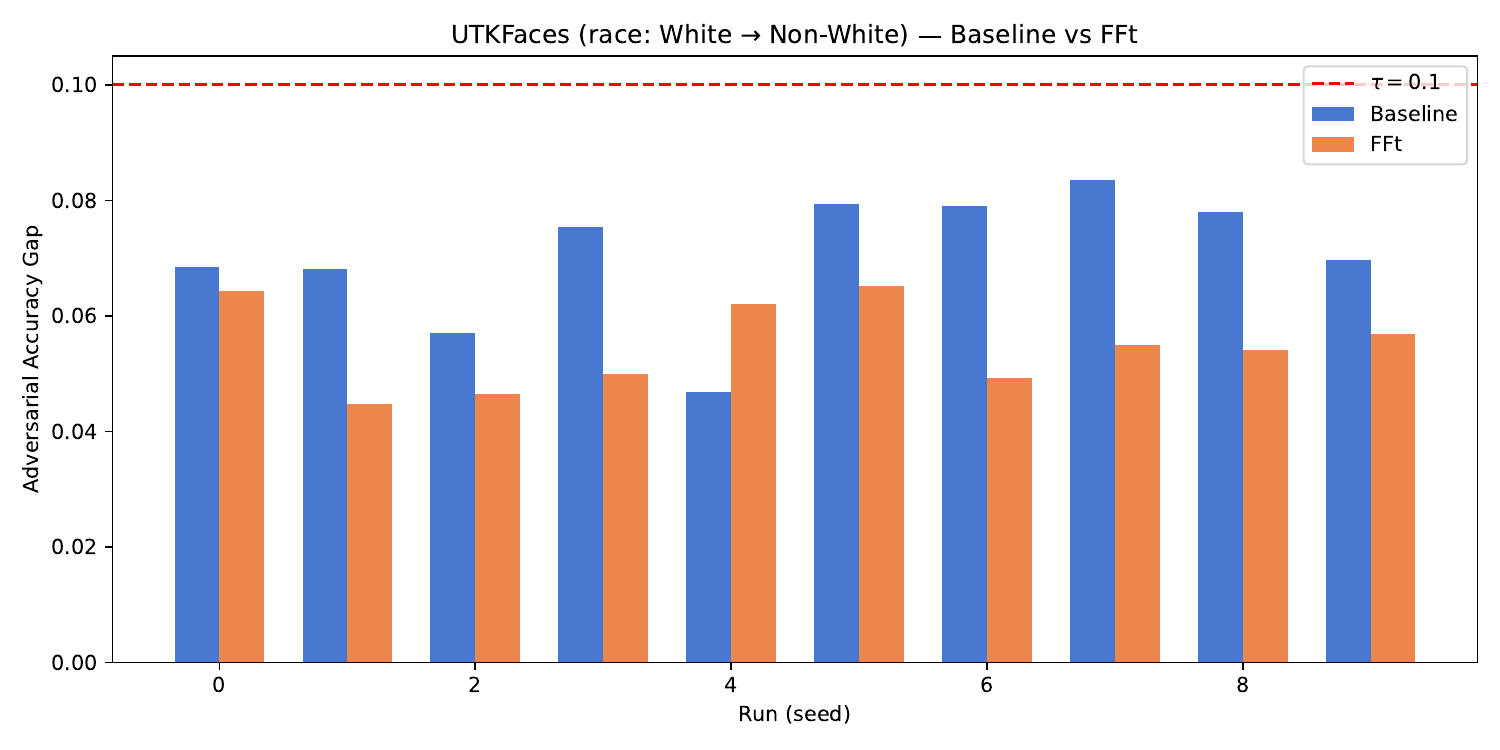}
    \caption{Adversarial accuracy gap for UTKFaces (\texttt{race}: White $\to$ Non-White, mean over 10 runs).}
    \label{fig:utk}
\end{figure*}

\begin{figure*}[!th]
    \centering
    \includegraphics[width=0.6\linewidth]{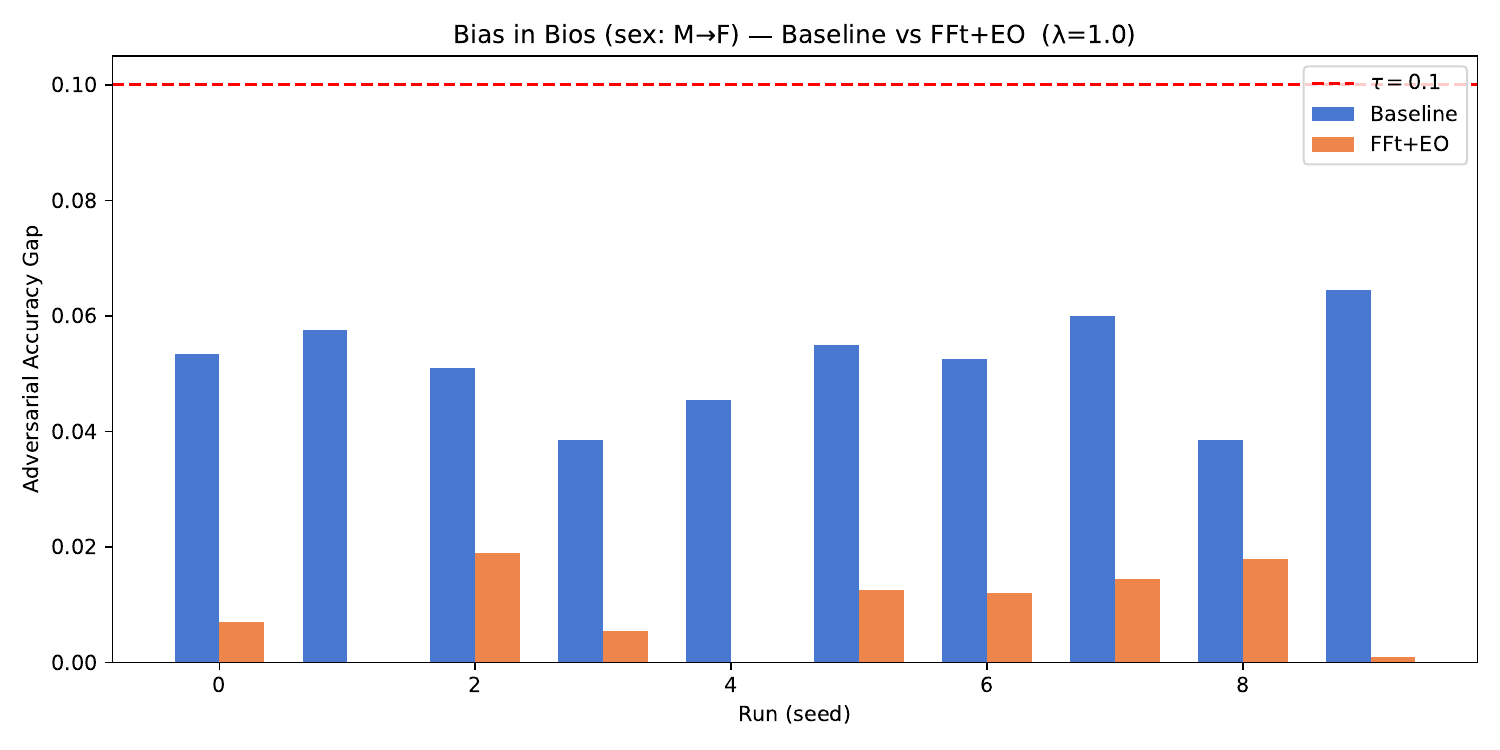}
    \caption{Adversarial accuracy gap for Bias in Bios (\texttt{sex}: Male $\to$ Female, mean over 10 runs).}
    \label{fig:bios}
\end{figure*}

\end{document}